\def\eqref#1{equation~\ref{#1}}
\def\1{\bm{1}}
\def\vf{{\bm{f}}}
\def\vg{{\bm{g}}}
\def\vi{{\bm{i}}}
\def\vj{{\bm{j}}}
\def\vy{{\bm{y}}}
\def\mA{{\bm{A}}}
\def\mB{{\bm{B}}}
\def\mC{{\bm{C}}}
\def\mD{{\bm{D}}}
\def\mG{{\bm{G}}}
\def\mH{{\bm{H}}}
\def\mI{{\bm{I}}}
\def\mL{{\bm{L}}}
\def\mM{{\bm{M}}}
\def\mO{{\bm{O}}}
\def\mP{{\bm{P}}}
\def\mU{{\bm{U}}}
\def\mW{{\bm{W}}}
\def\mX{{\bm{X}}}
\def\mY{{\bm{Y}}}
\def\mZ{{\bm{Z}}}
\def\mLambda{{\bm{\Lambda}}}
\DeclareMathAlphabet{\mathsfit}{\encodingdefault}{\sfdefault}{m}{sl}
\SetMathAlphabet{\mathsfit}{bold}{\encodingdefault}{\sfdefault}{bx}{n}
\def\sA{{\mathbb{A}}}
\def\sB{{\mathbb{B}}}
\def\sH{{\mathbb{H}}}
\def\sI{{\mathbb{I}}}
\def\sO{{\mathbb{O}}}
\def\sS{{\mathbb{S}}}
\def\sT{{\mathbb{T}}}
\def\sV{{\mathbb{V}}}
\def\sW{{\mathbb{W}}}
\def\sX{{\mathbb{X}}}
\newcommand{\R}{\mathbb{R}}
\DeclareMathOperator*{\argmax}{arg\,max}
\begin{document}

\title{Learning Multiresolution Matrix Factorization and its Wavelet Networks on Graphs}

\author{\name Truong Son Hy \email hytruongson@uchicago.edu \\
       \addr Department of Computer Science \\
       University of Chicago \\
       Chicago, IL 60637, USA
       \AND
       \name Risi Kondor \email risi@uchicago.edu \\
       \addr Department of Computer Science and Department of Statistics \\
       University of Chicago \\
       Chicago, IL 60637, USA}

\editor{Kevin Murphy and Bernhard Sch{\"o}lkopf}

\maketitle

\begin{abstract}
Multiresolution Matrix Factorization (MMF) is unusual amongst fast matrix factorization algorithms in that 
it does not make a low rank assumption. This makes MMF especially well suited to modeling certain types of 
graphs with complex multiscale or hierarchical strucutre. While MMF promises to yields a useful wavelet basis, 
finding the factorization itself is hard, and existing greedy methods tend to be brittle. In this paper 
we propose a ``learnable'' version of MMF that carfully optimizes the factorization with a combination of 
reinforcement learning and Stiefel manifold optimization through backpropagating errors. We show that the 
resulting wavelet basis far outperforms prior MMF algorithms and provides the first version of this 
type of factorization that can be robustly deployed on standard learning tasks. 
\end{abstract}

\begin{keywords}
Matrix factorization, multiresolution analysis, manifold optimization, wavelet neural networks, graph learning
\end{keywords}
\section{Introduction} \label{sec:Introduction}

In certain machine learning problems large matrices have complex hierarchical structures that traditional 
linear algebra methods based on the low rank assumption struggle to capture. 
Multiresolution matrix factorization (MMF) is a relatively little used alternative paradigm
that is designed to capture structure at multiple different scales. 
MMF has been found to be particularly effective at compressing the adjacency or Laplacian 
matrices of graphs with complicated structure, such as social networks \citep{DBLP:conf/icml/KondorTG14}. \\ \\
MMF factorizations have a number of advantages, including the fact that they are easy to invert and have an 
interpretation as a form of wavelet analysis on the matrix and consequently on the underlying graph.  
The wavelets can be used e.g., for finding sparse approximations of graph signals. 
Finding the actual MMF factorization however is a hard optimization problem combining elements of 
continuous and combinatorial optimization. 
Most of the existing MMF algorithms just tackle this with a variety of greedy heuristics and are consequently 
brittle: the resulting factorizations typically have large variance and most of the time yield factorizations 
that are far from the optimal \citep{pmlr-v51-teneva16,8099564,NIPS2017_850af92f}. \\ \\
The present paper proposes an alternative paradigm to MMF optimization based on ideas from deep learning.  
Specifically, we employ an iterative approach to optimizing the factorization based on backpropagating the factorization 
error and a reinforcement learning strategy for solving the combinatorial part of the problem. 
While more expensive than the greedy approaches, we find that the resulting ``learnable'' MMF produces 
much better quality factorizations and a wavelet basis that is smoother and better reflects the structure 
of the underlying matrix or graph. 
Unsurprisingly, this also means that the factorization performs better in downstream tasks. \\ \\
To apply our learnable MMF algorithm to standard benchmark tasks, we also propose a wavelet extension of 
the Spectral Graph Networks algorithm of \citep{ae482107de73461787258f805cf8f4ed} which we call the 
Wavelet Neural Network (WNN). Our experiments show that the combination of learnable MMF optimization 
with WNNs achieves state of the art results on several graph learning tasks. Beyond just benchmark performance, 
the greatly improved stability of MMF optimization process and the similarity of the hierarchical 
structure of the factorization to the architecture of deep neural networks opens up the possibility 
of MMF being tightly integrated with other learning algorithms in the future. 
\section{Related work} \label{sec:Related-work}

Compressing and estimating large matrices has been extensively studied from various directions, 
including 
\citep{Drineas2006FastMC},  
\citep{doi:10.1137/090771806}, 
\citep{NIPS2000_19de10ad} 
\citep{JMLR:v13:kumar12a}, 
\citep{10.1561/2200000035}, 
\citep{pmlr-v9-jenatton10a}. 
Many of these methods come with explicit guarantees but typically make the assumption that 
the matrix to be approximated is low rank. \\ \\
MMF is more closely related to other works on constructing wavelet bases on discrete spaces, 
including wavelets defined based on diagonalizing the diffusion operator or the normalized graph Laplacian 
\citep{COIFMAN200653} \citep{HAMMOND2011129} and multiresolution on trees 
\citep{10.5555/3104322.3104370} \citep{10.1214/07-AOAS137}. 
MMF has been used for matrix compression \citep{pmlr-v51-teneva16}, 
kernel approximation \citep{NIPS2017_850af92f} 
and inferring semantic relationships in medical imaging data \citep{8099564}. \\ \\
Most of the combinatorial optimization problems over graphs are NP-Hard, which means that no polynomial time solution can be developed for them. Many traditional algorithms for solving such problems involve using suboptimal heuristics designed by domain experts, and only produce approximations that are guaranteed to be some factor worse than the true optimal solution. Reinforcement learning (RL) proposes an alternative to replace these heuristics and approximation algorithms by training an agent in a supervised or self-supervised manner \citep{DBLP:journals/corr/BelloPLNB16} \citep{Mazyavkina21}. \citep{NIPS2017_d9896106} proposed the use of graph embedding network as the agent to capture the current state of the solution and determine the next action. Similarly, our learning algorithm addresses the combinatorial part of the MMF problem by gradient-policy algorithm that trains graph neural networks as the RL agent. \\ \\
Graph neural networks (GNNs) utilizing the generalization of convolution concept to graphs have been popularly applied to many learning tasks such as estimating quantum chemical computation, and modeling physical systems, etc. 
Spectral methods such as \citep{ae482107de73461787258f805cf8f4ed} provide one way to define convolution on 
graphs is via convolution theorem and graph Fourier transform (GFT). 
To address the high computational cost of GFT, \citep{xu2018graph} proposed to use the diffusion wavelet bases 
as previously defined by \citep{COIFMAN200653} instead for a faster transformation. 
\section{Background on Multiresolution Matrix Factorization}


The \textit{Multiresolution Matrix Factorization} (MMF) of a matrix $\mA\in\mathbb{R}^{n\times n}$ is a factorization of the form
$$\mA = \mU_1^T \mU_2^T \dots \mU_L^T \mH \mU_L \dots \mU_2 \mU_1,$$
where the $\mH$ and $\mU_1, \dots, \mU_L$ matrices conform to the following constraints: 
(i) Each $\mU_\ell$ is an orthogonal matrix that is a $k$-point rotation for some small $k$, 
meaning that it only rotates $k$ coordinates at a time; 
(ii) There is a nested sequence of sets $\sS_L \subseteq \cdots \subseteq \sS_1 \subseteq \sS_0 = [n]$ 
such that the coordinates rotated by $\mU_\ell$ are a subset of $\sS_\ell$; and 
(iii) $\mH$ is an $\sS_L$-core-diagonal matrix meaning that is diagonal with a an additional small 
$\sS_L\times \sS_L$ dimensional ``core''. 
Finding the best MMF factorization to a symmetric matrix $\mA$ involves solving
\begin{equation}
\min_{\substack{\sS_L \subseteq \cdots \subseteq \sS_1 \subseteq \sS_0 = [n] \\ \mH \in \sH^{\sS_L}_n; \mU_1, \dots, \mU_L \in \sO}} || \mA - \mU_1^T \dots \mU_L^T \mH \mU_L \dots \mU_1 ||.
\label{eq:mmf-opt}
\end{equation}
Assuming that we measure error in the Frobenius norm, (\ref{eq:mmf-opt}) is equivalent to
\begin{equation}
\min_{\substack{\sS_L \subseteq \cdots \subseteq \sS_1 \subseteq \sS_0 = [n] \\ \mU_1, \dots, \mU_L \in \sO}} || \mU_L \dots \mU_1 \mA \mU_1^T \dots \mU_L^T ||^2_{\text{resi}},
\label{eq:mmf-resi}
\end{equation}
where $||\cdot||_{\text{resi}}^2$ is the squared residual norm 
$||\mH||_{\text{resi}}^2 = \sum_{i \neq j; (i, j) \not\in \sS_L \times \sS_L} |\mH_{i, j}|^2$. 
There are two fundamental difficulties in MMF optimization: 
finding the optimal nested sequence of $\sS_\ell$ is a combinatorially hard 
(e.g., there are ${d_\ell \choose k}$ ways to choose $k$ indices out of $\sS_\ell$); 
and the solution for $\mU_\ell$ must satisfy the orthogonality constraint such that 
$\mU_\ell^T \mU_\ell = \mI$. 
The existing literature on solving this optimization problem 
\citep{DBLP:conf/icml/KondorTG14} \citep{pmlr-v51-teneva16} \citep{8099564} 
\citep{NIPS2017_850af92f} has various heuristic elements and has a number of limitations: 
\vspace{-5pt}
\begin{itemize}
\setlength{\itemsep}{0pt}
\item There is no guarantee that the greedy heuristics (e.g., clustering) used in selecting 
$k$ rows/columns $\sI_\ell = \{i_1, .., i_k\} \subset \sS_\ell$ for each rotation  
return a globally optimal factorization.
\item Instead of direct optimization for each rotation 
$\mU_\ell \triangleq \mI_{n - k} \oplus_{\sI_\ell} \mO_\ell$ where $\mO_\ell \in \sS\sO(k)$ 
globally and simultaneously with the objective (\ref{eq:mmf-opt}), 
Jacobi MMFs (see Proposition 2 of \citep{DBLP:conf/icml/KondorTG14}) apply the greedy strategy of optimizing 
them locally and sequentially. 
Again, this does not necessarily lead to a \textit{globally} optimal combination of rotations. 
\item 
Most MMF algorithms are limited to the simplest case of $k = 2$ where 
$\mU_\ell$ is just a Given rotation, which 
can be parameterized by a single variable, the rotation angle $\theta_\ell$. 
This makes it possible to optimize the greed objective by simple gradient descent, but 
larger rotations would yield more expressive factorizations and better approximations.
\end{itemize}
\vspace{-5pt}
In contrast, we propose an iterative algorithm to directly optimize the global MMF objective 
(\ref{eq:mmf-opt}): 
\begin{itemize}
\item We use gradient descent algorithm on the Stiefel manifold to optimize all rotations 
$\{\mU_\ell\}_{\ell = 1}^L$ \textit{simultaneously}, whilst satisfying the orthogonality constraints. 
Importantly, the Stiefel manifold optimization is not limited to $k = 2$ case 
(Section ~\ref{sec:stiefel}).
\item We formulate the problem of finding the optimal nested sequence $\sS_L \subseteq \cdots \subseteq \sS_1 \subseteq \sS_0 = [n]$ as learning a Markov Decision Process (MDP) that can be subsequently solved by the gradient policy method of Reinforcement Learning (RL), in which the RL agent (or stochastic policy) is modeled by graph neural networks (GNN) (Section ~\ref{sec:RL}).
\end{itemize}
We show that the resulting learning-based MMF algorithm outperforms existing greedy MMFs and other 
traditional baselines for matrix approximation in various scenarios (see Section \ref{sec:Experiments}).

\section{Stiefel Manifold Optimization} \label{sec:stiefel}

The MMF optimization problem in (\ref{eq:mmf-opt}) and (\ref{eq:mmf-resi}) is equivalent to
\begin{equation}
\min_{\sS_L \subseteq \cdots \subseteq \sS_1 \subseteq \sS_0 = [n]} \min_{\mU_1, \dots, \mU_L \in \sO} || \mU_L \dots \mU_1 \mA \mU_1^T \dots \mU_L^T ||^2_{\text{resi}},
\label{eq:mmf-two-phases}
\end{equation}
In order to solve the inner optimization problem of (\ref{eq:mmf-two-phases}), 
we consider the following generic optimization with orthogonality constraints:
\begin{equation}
\min_{\mX \in \R^{n \times p}} \mathcal{F}(\mX), \ \ \text{s.t.} \ \ \mX^T \mX = \mI_p,
\label{eq:opt-prob}
\end{equation}
where $\mI_p$ is the identity matrix and $\mathcal{F}(\mX): \R^{n \times p} \rightarrow \R$ 
is a differentiable function. 
The feasible set $\mathcal{V}_p(\R^n) = \{\mX \in \R^{n \times p}: \mX^T \mX = \mI_p\}$ is referred to as the 
Stiefel manifold of $p$ orthonormal vectors in $\R^{n}$ that has dimension equal to $np - \frac{1}{2}p(p + 1)$. 
We will view $\mathcal{V}_p(\R^n)$ as an embedded submanifold of $\R^{n \times p}$. \\ \\
When there is more than one orthogonal constraint, (\ref{eq:opt-prob}) is written as
\begin{equation}
\min_{\mX_1 \in \mathcal{V}_{p_1}(\R^{n_1}), \dots, \mX_q \in \mathcal{V}_{p_q}(\R^{n_q})} \mathcal{F}(\mX_1, \dots, \mX_q)
\label{eq:opt-prob-extended}
\end{equation}
where there are $q$ variables with corresponding $q$ orthogonal constraints. 
For example, in the MMF optimization problem (\ref{eq:mmf-opt}), suppose we are already given $\sS_L \subseteq \cdots \subseteq \sS_1 \subseteq \sS_0 = [n]$ meaning that the indices of active rows/columns at each resolution were already determined, for simplicity. In this case, we have $q = L$ number of variables such that each variable $\mX_\ell = \mO_\ell \in \R^{k \times k}$, where $\mU_\ell = \mI_{n - k} \oplus_{\sI_\ell} \mO_\ell \in \R^{n \times n}$ in which $\sI_\ell$ is a subset of $k$ indices from $\sS_\ell$, must satisfy the orthogonality constraint. The corresponding objective function is 
\begin{equation}
\mathcal{F}(\mO_1, \dots, \mO_L) = || \mU_L \dots \mU_1 \mA \mU_1^T \dots \mU_L^T ||^2_{\text{resi}}.
\label{eq:mmf-core}
\end{equation}
Details about Stiefel manifold optimization is included in the Appendix.
\section{Reinforcement Learning} \label{sec:RL}

\subsection{Problem formulation} \label{sec:RL-problem}

We formulate the problem of finding the optimal nested sequence of sets 
$\sS_L \subseteq \cdots \subseteq \sS_1 \subseteq \sS_0 = [n]$ as learning an RL agent in order to solve 
the MMF optimization in (\ref{eq:mmf-opt}). 
There are two fundamental parts to index selection for each resolution level $\ell \in \{1, .., L\}$:
\vspace{-5pt}
\begin{itemize}
\setlength{\itemsep}{0pt}
\item Select $k$ indices $\sI_\ell = \{i_1, .., i_k\} \subset \sS_{\ell - 1}$ to construct the corresponding rotation matrix $\mU_\ell$ (see Section ~\ref{sec:stiefel}).
\item Select the set of indices $\sT_\ell \subset \sS_{\ell - 1}$ of rows/columns that are to be wavelets at 
this level, and then be eliminated by setting $\sS_\ell = \sS_{\ell - 1} \setminus \sT_\ell$. 
To reduce the computational cost, we assume that each resolution level has only one row/column to be selected as the wavelet (e.g., a single wavelet) such that $|\sT_\ell| = 1$. That means the cardinality of $\sS_\ell$ reduces by $1$ after each level, $d_\ell = n - \ell$, and size of the core block of $\mH$ is $(n - L) \times (n - L)$ that corresponds to exactly $n - L$ active rows/columns at the end.
\end{itemize}

\subsection{Markov Decision Process} \label{sec:MDP}

A key task for building our model is to specify our index selection procedure. We design an iterative index selection process and formulate it as a general decision process $M = (S, A, P, R, \gamma)$ as follows. \\ \\
$S$ is the set of states (or state space) that consists of all possible intermediate and final states in which each state $s \in S$ is a tuple of $(\overline{\mA}, \sS, \ell)$ where $\ell$ indicates the resolution level, $\sS$ indicates the set of active row/column indices, and $\overline{\mA} = \mA_{\sS, \sS}$ indicates the sub-matrix of $\mA$ with indices of rows and columns are from $\sS$ (e.g., $|\sS| = d_\ell$, $\overline{\mA} \in \R^{d_\ell \times d_\ell}$). We start at state $s_0 = (\mA, [n], 0)$ where $\mA$ is the input matrix that MMF tries to factorize (e.g., no rows/columns removed yet) and $[n]$ indicates all rows/columns are still active. The set of terminal (final) states $S^* \subset S$ includes every state $s^*$ that has $\ell = L$. \\ \\
$A$ is the set of actions that describe the modification made to current state at each time step. An action $a \in A$ validly applied to a non-terminal state $s = (\mA_{\sS, \sS}, \sS, \ell)$ is a tuple $(\sI, \sT)$ where $\sI = \{i_1, .., i_k\} \subset \sS$ is the set of $k$ indices corresponding to the rotation matrix $\mU_{\ell + 1}$, and $\sT \subset \sS$ is the set of wavelet indices to spit out at this level. This action transforms the state into the next one $s' = (\mA_{\sS', \sS'}, \sS', \ell + 1)$ where $\sS' = \sS \setminus \sT$ meaning the set of active indices gets shrinked further. The action is called invalid for the current state if and only if $\sI \not \subset \sS$ or $\sT \not \subset \sS$. \\ \\
$P$ is the transition dynamics that specifies the possible outcomes of carrying out an action at time $t$, $p(s_{\ell + 1}|s_\ell, .., s_0, a_\ell)$, as a conditional probability on the sequence of previous states $(s_0, .., s_\ell)$ and the action $a_\ell$ applied to state $s_\ell$. Basically, the RL environment carries out actions that obey the given action rules. Invalid actions proposed by the policy network are rejected and the state remains unchanged. The state transition distribution is constructed as
$$p(s_{\ell + 1}|s_\ell, .., s_0) = \sum_{a_\ell} p(a_\ell|s_\ell, .., s_0) p(s_{\ell + 1}|s_\ell, .., s_0, a_\ell),$$
where $p(a_\ell|s_\ell, .., s_0)$ is represented as a parameterized policy network $\pi_\theta$ with learnable parameters $\theta$. Markov Decision Process (MDP) requires the state transition dynamics to satisfy the Markov property: $p(s_{\ell + 1}|s_\ell, .., s_0) = p(s_{\ell + 1}|s_\ell)$. Under this property, the policy network $\pi_\theta(a_\ell|s_\ell)$ only needs the intermediate state $s_\ell$ to derive an action. The whole trajectory is always started by the same $s_0$ and finished by a terminal state after exactly $L$ transitions as depicted as follows:
$$s_0 \xrightarrow{~~a \sim \pi_\theta(\cdot|s_0)~~} s_1 \xrightarrow{~~a \sim \pi_\theta(\cdot|s_1)~~} s_2 \dots s_{L - 1} \xrightarrow{~~a \sim \pi_\theta(\cdot|s_{L - 1})~~} s_L \in S^*.$$
Series of actions recorded along the trajectory allows us to easily construct the nested sequence $\sS_L \subseteq \cdots \subseteq \sS_1 \subseteq \sS_0 = [n]$. \\ \\
$R(s^*)$ is the reward function that specifies the reward after reaching a terminal state $s^*$. The reward function is defined as negative of the MMF reconstruction loss such that
\begin{equation}
R(s^*) = - || \mA - \mU_1^T \dots \mU_L^T \mH \mU_L \dots \mU_1 ||_F.
\label{eq:final-reward}
\end{equation}
We want to maximize this final reward that is equivalent to minimize error of MMF in Frobenius norm (as in problem (\ref{eq:mmf-opt})). Evaluation of the reward requires the Stiefel manifold optimization (see Section ~\ref{sec:stiefel}) for rotations $\{\mU_\ell\}_{\ell = 1}^L$. Obviously, the final reward in Eq.~(\ref{eq:final-reward}) is the most important. However, to improve the training quality of the policy, we can define the intermediate reward $R(s)$ for non-terminal states $s = (\mA_{\sS, \sS}, \sS, \ell) \not \in S^*$ as the immediate improvement of the $\ell$-th resolution ($L > \ell > 0$):
\begin{equation}
R(s) = - ||[\mU_\ell \mA_{\ell - 1} \mU_\ell^T]_{\sS, \sS}||_{\text{resi}}^2.
\label{eq:immediate-reward}
\end{equation}
Along the trajectory $(s_0, s_1, .., s_L)$, we generate the corresponding sequence of rewards 
$(r_1, r_2, .., r_L)$ based on (\ref{eq:final-reward}, \ref{eq:immediate-reward}). \\ \\
$\gamma$ is the discount factor, a penalty to uncertainty of future rewards, $0 < \gamma \leq 1$. We define the return or discounted future reward $g_\ell$ for $\ell = 0, .., L - 1$ as
\begin{equation}
g_\ell = \sum_{k = 0}^{L - \ell - 1} \gamma^k r_{\ell + k + 1},
\label{eq:return}
\end{equation}
which in the case of $\gamma = 1$ indicates simply accumulating all the immediate rewards and the final reward along the trajectory. 

\subsection{Graph convolutional policy network} \label{sec:gnn}

In this section, we design our policy network $\pi_\theta$ as a graph neural network (GNN) 
with the message passing scheme. 
We consider the symmetric matrix $\mA$ being represented by a weighted undirected graph $\mathcal{G} = (V, E)$ in which $V$ is the set of nodes such that each node corresponds to a row/column of $\mA$, and $E$ is the set of edges such that the edge $(i, j) \in E$ has the weight $\mA_{i, j}$. As defined in Section ~\ref{sec:MDP}, a state $s \in S$ is a tuple $(\mA_{\sS, \sS}, \sS, \ell)$ in which $\mA_{\sS, \sS}$ is the sub-matrix restricted to the active rows/columns $\sS$, and an action $a \in A$ is a tuple $(\sI, \sT)$ in which $\sI$ is the set of $k$ indices corresponding to the $(\ell + 1)$-th rotation and $\sT$ is the set of indices to spit out as wavelets. Practically, a state can be simply represented by a single binary vector such that if a bit is $1$ then the corresponding index is active, without the need to explicitly storage matrix $\mA_{\sS, \sS}$ that can be efficiently constructed from $\mA$ by any numerical toolkit. Our GNN policy network $\pi_\theta(a|s)$ learns to encode the underlying graph represented by $\mA_{\sS, \sS}$ and returns a sample of valid action such that $\sT \subset \sI \subset \sS$. In Section ~\ref{sec:RL-problem}, we assume that $\sT$ contains only a single index that we will call as the \textit{pivot} $i^*$ (e.g., $\sT = \{i^*\}$). Thus, the task of our GNN model is to learn to select the pivot $i^*$ first, and then select the rest $k - 1$ indices of $\sI$ that are highly correlated to the pivot. \\ \\
The simplest implementation of GNNs is Message Passing Neural Networks (MPNNs) \citep{DBLP:journals/corr/GilmerSRVD17}. Suppose that the node embeddings (messages) $\mM_0 \in \R^{N \times F}$ are initialized by the input node features where $N$ is the number of nodes and $F$ is the number of features for each node. Iteratively, the messages are propagated from each node to its neighborhood, and then transformed by a combination of linear transformations and non-linearities, e.g.,
\begin{equation}
\hat{\mM}_t = \mA\mM_{t - 1}, \ \ \ \ \mM_t = \sigma(\hat{\mM}_t \mW_{t - 1}),
\label{eq:mpnn}
\end{equation}
where $\mA$ is the adjacency matrix; $\hat{\mM}_t$ and $\mM_t \in \R^{N \times D}$ are the 
aggregated messages (by summing over each node's neighborhood) and the output messages at the $t$'th iteration, respectively; $\sigma$ is a element-wise non-linearity function (e.g., sigmoid, ReLU, etc.); and $\mW$s are learnable weight matrices such that $\mW_0 \in \R^{F \times D}$ and $\mW_t \in \R^{D \times D}$ for $t > 0$. Basically, the set of learnable parameters $\theta$ of our policy network $\pi$ includes all $\mW$s. In some cases, a graph Laplacian $\mL$ is used instead of the adjacency matrix $\mA$ in model (\ref{eq:mpnn}), for example, graph Laplacian $\mL = \mD^{-1}\mA$ or its symmetric normalized version $\tilde{\mL} = \mI - \mD^{-1/2} \mA \mD^{-1/2}$. One way to incorporate the set of active rows/columns/nodes $\sS$ into our GNN model is by initializing the input node feature with a binary label ($f = 1$) such that a node $v$ has label $1$ if the $v$-th row/column is still active, otherwise $0$. For a more efficient implementation, we can execute the message passing in the block $\mA_{\sS, \sS}$ only. Supposing that the message passing scheme is executed for $T$ iterations, we concatenate messages from every iteration together into the final embedding:
\begin{equation}
\mM = \bigoplus_{t = 1}^T \mM_t \in \R^{N \times DT}.
\label{eq:final-message}
\end{equation}
Model (\ref{eq:mpnn}) produces the embedding for each node that allows us to define a sampling procedure to select the pivot $i^*$. Given the final embedding $\mM$ from Eq.~(\ref{eq:final-message}), we define the probability $\mP_i$ that node $i \in \sS$ is being selected as the pivot as:
$$P_i = \frac{\exp(\hat{P}_i)}{\sum_{j \in \sS} \exp(\hat{P}_j)}, \ \ \ \ \text{where} \ \ \ \ \hat{P}_i = \sum_f \mM_{i, f}.$$
In order to make the sampling procedure differentiable for backpropagation, we apply the Gumbel-max trick \citep{Gumbel1954} \citep{NIPS2014_309fee4e} \citep{jang2017categorical} that provides a simple and efficient way to draw sample $i^*$ as follows:
$$i^* = \text{one-hot}\big(\argmax_{i \in \sS} \big[ G_i + \log P_i \big]\big),$$
where $G_i$ are i.i.d samples drawn from $\text{Gumbel}(0, 1)$. Technically, the sample is represented by a one-hot vector such that the $i^*$-th element is $1$. Similarly, $\sT$, $\sI$ and $\sS$ are represented by vectors in $\{0, 1\}^N$ in which a $1$-element indicates the existence of the corresponding index in the set. Furthermore, the set union and minus operations (e.g., $\sS \setminus \sT$) can be easily done by vector addition and subtraction, respectively. \\ \\
Given the pivot $i^*$, we compute the similarity score between $i^*$ and other nodes $i \in \sS$ as $C_i = \langle \mM_{i^*, :}, \mM_{i, :} \rangle$. Finally, we sample $k - 1$ nodes with the highest similarity scores to $i$ sequentially (one-by-one) without replacement by the Gumbel-max trick, that completes our sampling procedure for action $a = (\sI, \sT)$. \\ \\
The REINFORCE \citep{SCC.Williams1988} \citep{10.1007/BF00992696} \citep{NIPS1999_464d828b} update rule for policy parameters is 
\begin{equation}
\theta \leftarrow \theta + \eta \gamma^\ell g_\ell \nabla_\theta \log \pi_\theta(a_\ell|s_\ell) \ \ \ \ \text{for} \ \ \ell = 0, .., L - 1; \label{eq:REINFORCE-rule}
\end{equation}
where $\eta$ is the learning rate, that is used in training our policy network $\pi_\theta$ in Algorithm ~\ref{alg:final} (see section \textit{Policy gradient methods} in the Appendix).

\subsection{The learning algorithm} \label{sec:learing-algorithm}

Putting everything together, our MMF learning algorithm is sketched in Algorithm \ref{alg:final}. Iteratively: (1) we sample a trajectory by running the policy network that indicates the indices for rotation and wavelet for each resolution, (2) we apply the Stiefel manifold optimization to find the rotations, and (3) we compute the future rewards and update the parameters of the policy network by REINFORCE accordingly. 
The learning terminates when the average error over a window of size $\omega$ iterations increases.
\begin{algorithm*}
\caption{MMF learning algorithm optimizing problem (\ref{eq:mmf-opt})} \label{alg:final}
\begin{algorithmic}[1]
\State \textbf{Input:} Matrix $\mA$, number of resolutions $L$, and constants $k$, $\gamma$, $\eta$, and $\omega$.
\State Initialize the policy parameter $\theta$ at random.
\While{true}
\State Start from state $s_0$ \Comment{$s_0 \triangleq (\mA, [n], 0)$}
\State Initialize $\sS_0 \leftarrow [n]$ \Comment{All rows/columns are active at the beginning.}
\For{$\ell = 0, .., L - 1$}
	\State Sample action $a_\ell = (\sI_{\ell + 1}, \sT_{\ell + 1})$ from $\pi_\theta(a_\ell|s_\ell)$. \Comment{See Section \ref{sec:gnn}.}
	\State $\sS_{\ell + 1} \leftarrow \sS_\ell \setminus \sT_{\ell + 1}$ \Comment{Eliminate the wavelet index (indices).}
	\State $s_{\ell + 1} \leftarrow (\mA_{\sS_{\ell + 1}, \sS_{\ell + 1}}, \sS_{\ell + 1}, \ell + 1)$ \Comment{New state with a smaller active set.}
\EndFor
\State Given $\{\sI_\ell\}_{\ell = 1}^L$, minimize objective (\ref{eq:mmf-core}) by Stiefel manifold optimization to find $\{\mO_\ell\}_{\ell = 1}^L$. \Comment{$\mU_\ell = \mI_{n - k} \oplus_{\sI_\ell} \mO_\ell$}
\For{$\ell = 0, .., L - 1$}
	\State Estimate the return $g_\ell$ based on Eq.~(\ref{eq:final-reward}), Eq.~(\ref{eq:immediate-reward}), and Eq.~(\ref{eq:return}).
	\State $\theta \leftarrow \theta + \eta \gamma^\ell g_\ell \nabla_\theta \log \pi_\theta(a_\ell|s_\ell)$ \Comment{REINFORCE policy update in Eq.~(\ref{eq:REINFORCE-rule})}
\EndFor
\State Terminate if the average error of the last $\omega$ iterations increases. \Comment{Early stopping.}
\EndWhile
\end{algorithmic}
\end{algorithm*}

\section{Wavelet Networks on Graphs} \label{sec:networks}

\subsection{Motivation}

The eigendecomposition of the normalized graph Laplacian operator $\tilde{\mL} = \mU^T \mH \mU$ 
can be used as the basis of a graph Fourier transform. \citep{6494675} defines graph Fourier transform (GFT) on a graph $\mathcal{G} = (V, E)$ of a graph signal $\vf \in \R^n$ (that is understood as a function $f: V \rightarrow \R$ defined on the vertices of the graph) as $\hat{\vf} = \mU^T \vf$, and the inverse graph Fourier transform as $\vf = \mU \hat{\vf}$. Analogously to the classical Fourier transform, GFT provides a way to represent a graph signal in two domains: the vertex domain and the graph spectral domain; to filter graph signal according to smoothness; and to define the graph convolution operator, denoted as $*_{\mathcal{G}}$:
\begin{equation}
\vf *_{\mathcal{G}} \vg = \mU \big( (\mU^T \vg) \odot (\mU^T \vf) \big),
\label{eq:gft-conv}
\end{equation}
where $\vg$ denotes the convolution kernel, and $\odot$ is the element-wise Hadamard product. 
If we replace the vector $\mU^T \vg$ by a diagonal matrix $\tilde{\vg}$, then we can rewrite the Hadamard 
product in Eq.~(\ref{eq:gft-conv}) to matrix multiplication as $\mU \tilde{\vg} \mU^T \vf$ (that is understood as filtering the signal $\vf$ by the filter $\tilde{\vg}$). Based on GFT, \citep{ae482107de73461787258f805cf8f4ed} and \citep{10.5555/3157382.3157527} construct convolutional neural networks (CNNs) learning on spectral domain for discrete structures such as graphs. However, there are two fundamental limitations of GFT:
\begin{itemize}
\item High computational cost: eigendecomposition of the graph Laplacian has complexity $O(n^3)$, 
and ``Fourier transform'' itself involves multiplying the signal with a dense matrix of eigenvectors. 
\item The graph convolution is not localized in the vertex domain, even 
if the graph itself has well defined local communities.
\end{itemize}
To address these limitations, we propose a modified spectral graph network based on the MMF wavelet 
basis rather than the eigenbasis of the Laplacian. 
This has the following advantages: 
(i) the wavelets are generally localized in both vertex domain and frequency, 
(ii) the individual basis transforms are sparse, and 
(iii) MMF provides a computationally efficient way of decomposing graph signals into components at 
different granularity levels and an excellent basis for sparse approximations.

\subsection{Network construction}


In the case $\mA$ is the normalized graph Laplacian of a graph $\mathcal{G} = (V, E)$, the wavelet transform (up to level $L$) expresses a graph signal (function over the vertex domain) $f: V \rightarrow \R$, without loss of generality $f \in \sV_0$, as:
$$f(v) = \sum_{\ell = 1}^L \sum_m \alpha_m^\ell \psi_m^\ell(v) + \sum_m \beta_m \phi_m^L(v), \ \ \ \ \text{for each} \ \ v \in V,$$ 
where $\alpha_m^\ell = \langle f, \psi_m^\ell \rangle$ and $\beta_m = \langle f, \phi_m^L \rangle$ are the wavelet coefficients. At each level, a set of coordinates $\sT_\ell \subset \sS_{\ell -  1}$ are selected to be the wavelet indices, and then to be eliminated from the active set by setting $\sS_\ell = \sS_{\ell - 1} \setminus \sT_\ell$ (see Section \ref{sec:RL-problem}). Practically, we make the assumption that we only select $1$ wavelet index for each level (see Section \ref{sec:RL-problem}) 
that results in a single mother wavelet $\psi^\ell = [\mA_\ell]_{i^*, :}$ where $i^*$ is the selected index (see Section \ref{sec:gnn}). We get exactly $L$ mother wavelets $\overline{\psi} = \{\psi^1, \psi^2, \dots, \psi^L\}$. On the another hand, the active rows of $\mH = \mA_L$ make exactly $N - L$ father wavelets $\overline{\phi} = \{\phi^L_m = \mH_{m, :}\}_{m \in \sS_L}$. In total, a graph of $N$ vertices has exactly $N$ wavelets (both mothers and fathers). Analogous to the convolution based on GFT \citep{ae482107de73461787258f805cf8f4ed}, each convolution layer $k = 1, .., K$ of our wavelet network transforms an input vector $\vf^{(k - 1)}$ of size $|V| \times F_{k - 1}$ into an output $\vf^{(k)}$ of size $|V| \times F_k$ as
\begin{equation}
\vf^{(k)}_{:, j} = \sigma \bigg( \mW \sum_{i = 1}^{F_{k - 1}} \vg^{(k)}_{i, j} \mW^T \vf^{(k - 1)}_{:, i} \bigg) \ \ \ \ \text{for} \ \ j = 1, \dots, F_k,
\label{eq:wavevlet-conv}
\end{equation}
where $\mW$ is our wavelet basis matrix as we concatenate $\overline{\phi}$ and $\overline{\psi}$ column-by-column, $\vg^{(k)}_{i, j}$ is a parameter/filter in the form of a diagonal matrix learned in spectral domain, and $\sigma$ is an element-wise linearity (e.g., ReLU, sigmoid, etc.). \\ \\
For example, in node classification tasks, assume the number of classes is $C$, 
the set of labeled nodes is $V_{\text{label}}$, 
and we are given a normalized graph Laplacian $\tilde{L}$ and an input node feature matrix $\vf^{(0)}$. 
First of all, we apply our MMF learning algorithm \ref{alg:final} to factorize $\tilde{L}$ and produce our wavelet basis matrix $\mW$. Then, we construct our wavelet network as a multi-layer CNNs with each convolution is defined as in Eq.~(\ref{eq:wavevlet-conv}) that transforms $\vf^{(0)}$ into $\vf^{(K)}$ after $K$ layers. The top convolution layer $K$-th returns exactly $F_K = C$ features and uses softmax instead of the nonlinearity $\sigma$ for each node. The loss is the cross-entropy error over all labeled nodes as:
\begin{equation}
\mathcal{L} = - \sum_{v \in V_{\text{label}}} \sum_{c = 1}^C \vy_{v, c} \ln \vf^{(K)}_{v, c},
\label{eq:entropy-loss}
\end{equation}
where $\vy_{v, c}$ is a binary indicator that is equal to $1$ if node $v$ is labeled with class $c$, and $0$ otherwise. The set of weights $\{\vg^{(k)}\}_{k = 1}^K$ are trained using gradient descent optimizing the loss in Eq.~(\ref{eq:entropy-loss}).
\begin{figure*}
\begin{center}
\includegraphics[width=0.30\textwidth]{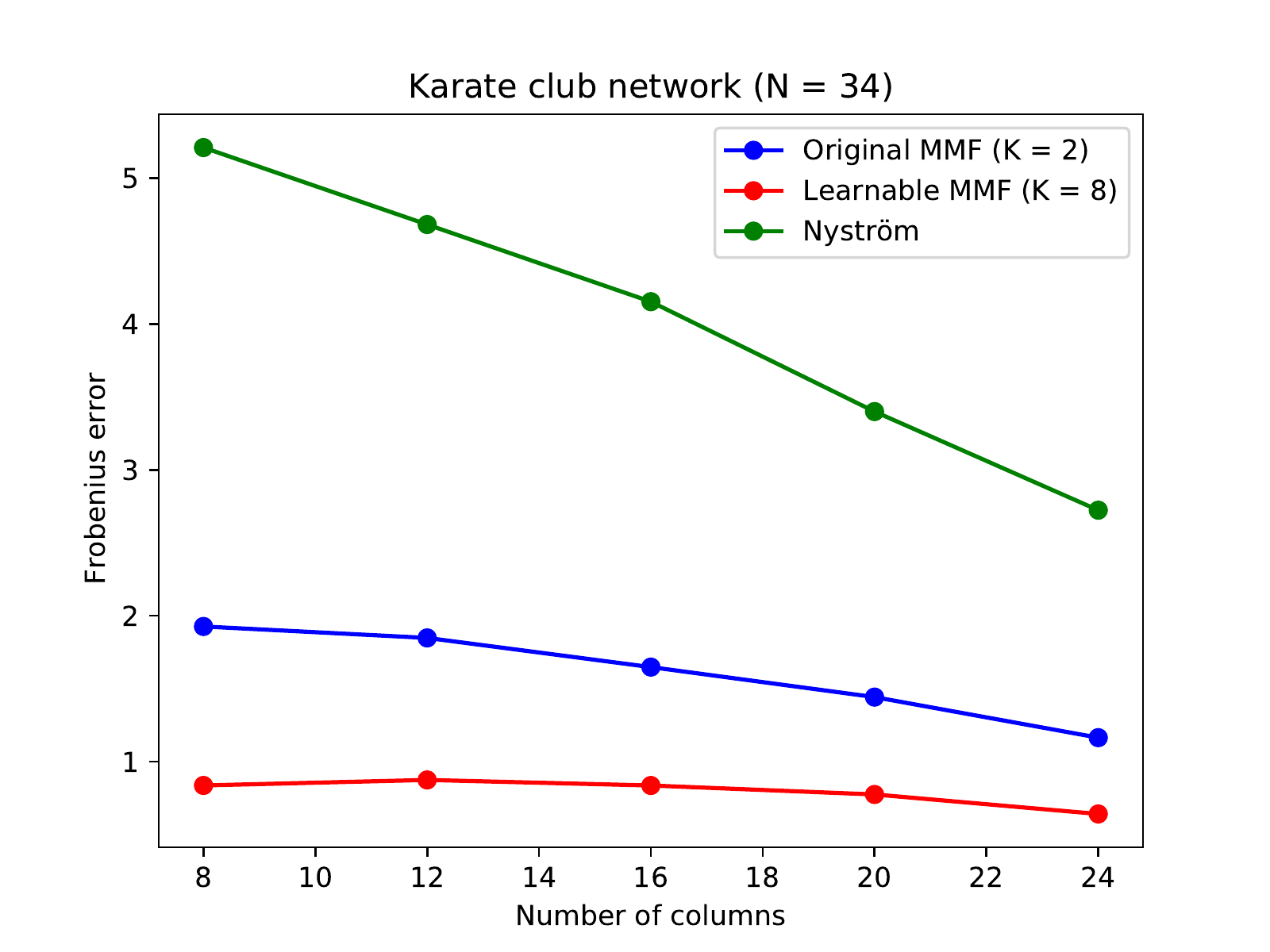}
\includegraphics[width=0.30\textwidth]{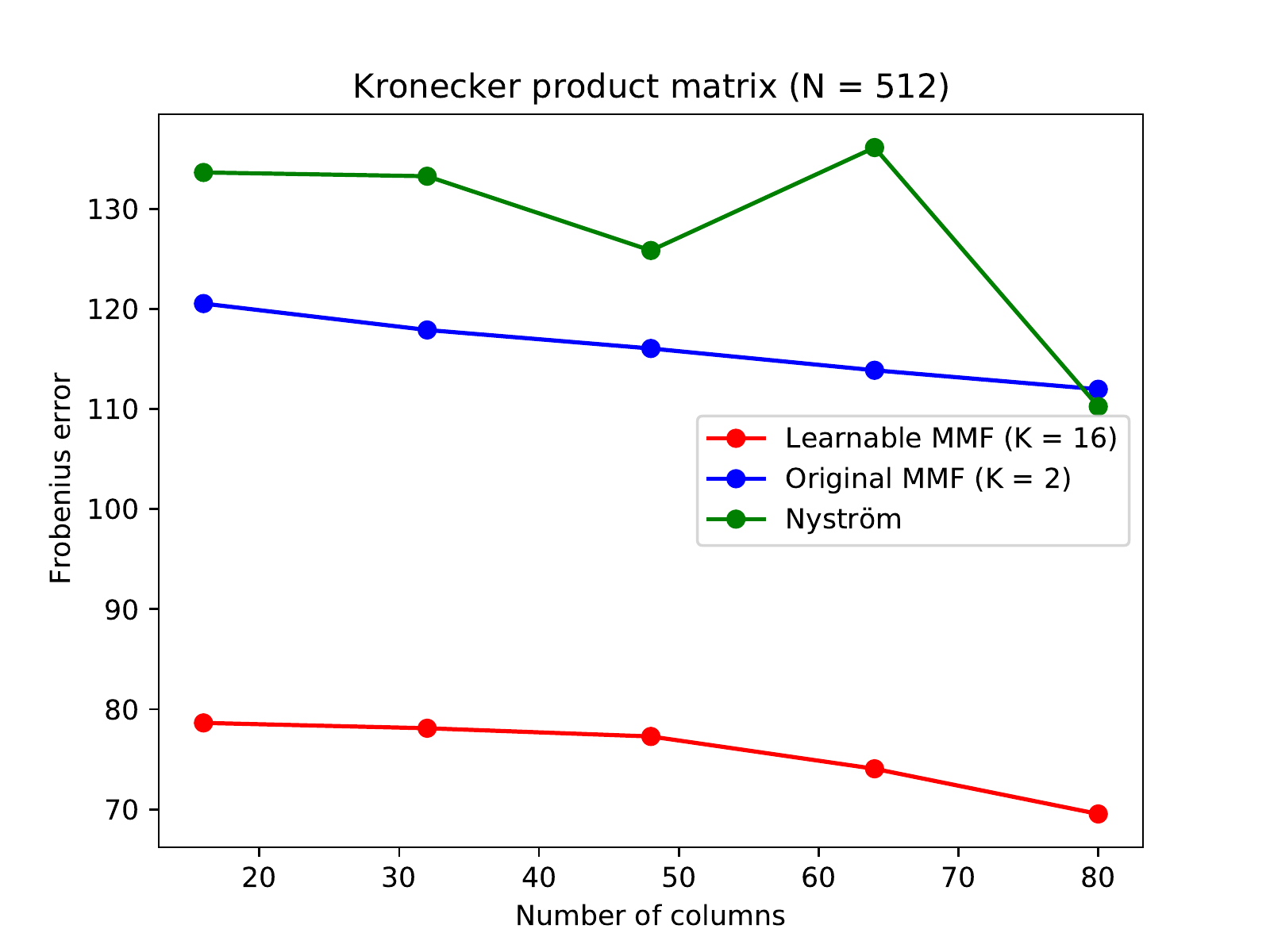}
\includegraphics[width=0.30\textwidth]{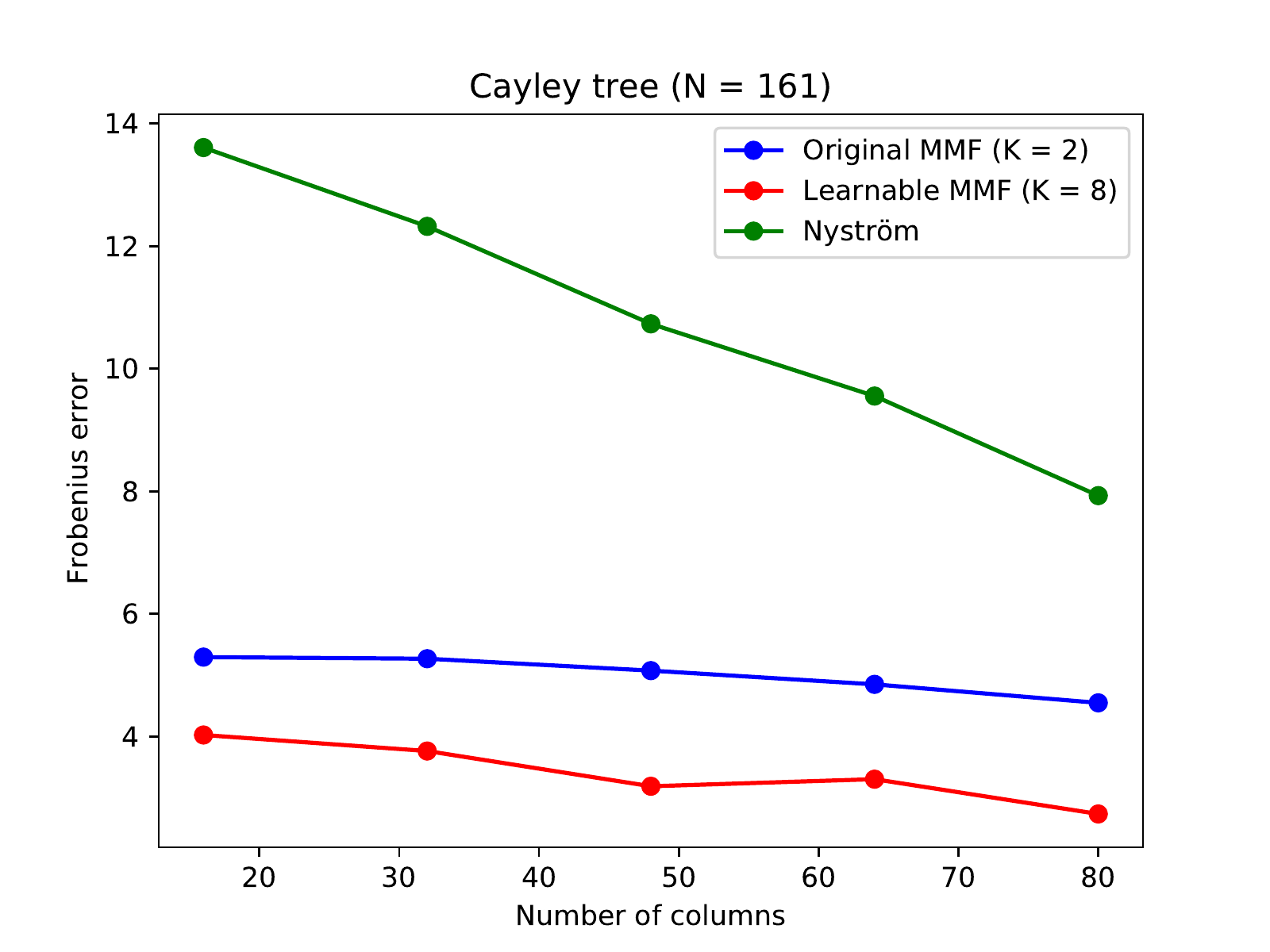}
\end{center}
\vspace{-10pt}
\caption{\label{fig:matrix} Matrix factorization for the Karate network (left), Kronecker matrix (middle), and Cayley tree (right). 
Our learnable MMF consistently outperforms the classic greed methods.}
\end{figure*}
\section{Experiments} \label{sec:Experiments}

\subsection{Matrix factorization} \label{sec:approx-exp}

We evaluate the performance of our MMF learning algorithm in comparison with the original greedy algorithm \citep{DBLP:conf/icml/KondorTG14} and the Nystr\"{o}m method \citep{pmlr-v28-gittens13} in the task of matrix factorization on 3 datasets: (i) normalized graph Laplacian of the Karate club network ($N = 34$, $E = 78$) \citep{Karate}; (ii) a Kronecker product matrix ($N = 512$), $\mathcal{K}_1^n$, of order $n = 9$, where $\mathcal{K}_1 = ((0, 1), (1, 1))$ is a $2 \times 2$ seed matrix \citep{JMLR:v11:leskovec10a}; and (iii) normalized graph Laplacian of a Cayley tree or Bethe lattice with coordination number $z = 4$ and $4$ levels of depth ($N = 161$). The rotation matrix size $K$ are $8$, $16$ and $8$ for Karate, Kronecker and Cayley, 
respectively. Meanwhile, the original greedy MMF is limited to $K = 2$ and implements an exhaustive search to find an optimal pair of indices for each rotation. For both versions of MMF, we drop $c = 1$ columns after each rotation, 
which results in a final core size of $d_L = N - c \times L$. 
The exception is for the Kronecker matrix ($N = 512$), our learning algorithm drops up to $8$ columns (for example, $L = 62$ and $c = 8$ results into $d_L = 16$) to make sure that the number of learnable parameters $L \times K^2$ is much smaller the matrix size $N^2$. 
Our learning algorithm compresses the Kronecker matrix down to $6-7\%$ of its original size. The details of efficient training reinforcement learning with the policy networks implemented by GNNs are included in the Appendix. \\ \\
For the baseline of Nystr\"{o}m method, we randomly select, by uniform sampling without replacement, the same number $d_L$ columns $\mC$ from $\mA$ and take out $\mW$ as the corresponding $d_L \times d_L$ submatrix of $\mA$. The Nystr\"{o}m method approximates $\mA \approx \mC\mW^{\dagger}\mC^T$. We measure the approximation error in Frobenius norm. Figure \ref{fig:matrix} shows our MMF learning algorithm consistently outperforms the original greedy algorithm and the Nystr\"{o}m baseline given the same number of active columns, $d_L$. Figure \ref{fig:wavelets-visual} depicts the wavelet bases at different levels of resolution.

\begin{table}
\setlength\tabcolsep{4.7pt}
\caption{\label{tbl:node-classification} Node classification on citation graphs. Baseline results are taken from \citep{xu2018graph}.}
\begin{center}
\begin{tabular}{||l | c | c ||}
\hline
\textbf{Method} & \textbf{Cora} & \textbf{Citeseer} \\
\hline
MLP & 55.1\% & 46.5\% \\
ManiReg \citep{JMLR:v7:belkin06a} & 59.5\% & 60.1\% \\
SemiEmb \citep{10.1145/1390156.1390303} & 59.0\% & 59.6\% \\
LP \citep{10.5555/3041838.3041953} & 68.0\% & 45.3\% \\
DeepWalk \citep{10.1145/2623330.2623732} & 67.2\% & 43.2\% \\
ICA \citep{Getoor2005} & 75.1\% & 69.1\% \\
Planetoid \citep{10.5555/3045390.3045396} & 75.7\% & 64.7\% \\
\hline
Spectral CNN \citep{ae482107de73461787258f805cf8f4ed} & 73.3\% & 58.9\% \\
ChebyNet \citep{10.5555/3157382.3157527} & 81.2\% & 69.8\% \\
GCN \citep{Kipf:2016tc} & 81.5\% & 70.3\% \\
MoNet \citep{MoNet} & 81.7\% & N/A \\
GWNN \citep{xu2018graph} & 82.8\% & 71.7\% \\
\hline
\textbf{MMF}$_1$ & \textbf{84.35\%} & 68.07\% \\
\textbf{MMF}$_2$ & \textbf{84.55\%} & \textbf{72.76\%} \\
\textbf{MMF}$_3$ & \textbf{87.59\%} & \textbf{72.90\%} \\
\hline
\end{tabular}
\end{center}
\end{table}

\begin{table*}[h]
\small
\caption{\label{tbl:graph-classification} Graph classification. Baseline results are taken from \citep{maron2018invariant}.}
\begin{center}
\small
\begin{tabular}{||l | c | c | c | c | c ||}
\hline
\textbf{Method} & \textbf{MUTAG} & \textbf{PTC} & \textbf{PROTEINS} & \textbf{NCI1} \\
\hline
\hline
DGCNN \citep{Zhang2018AnED} & 85.83 $\pm$ 1.7 & 58.59 $\pm$ 2.5 & 75.54 $\pm$ 0.9 & 74.44 $\pm$ 0.5 \\
\hline
PSCN \citep{DBLP:journals/corr/NiepertAK16} & 88.95 $\pm$ 4.4 & 62.29 $\pm$ 5.7 & 75 $\pm$ 2.5 & 76.34 $\pm$ 1.7 \\
\hline
DCNN \citep{10.5555/3157096.3157320} & N/A & N/A & 61.29 $\pm$ 1.6 & 56.61 $\pm$ 1.0 \\
\hline
CCN \citep{CompNetsArxiv18} & \textbf{91.64 $\pm$ 7.2} & \textbf{70.62 $\pm$ 7.0} & N/A & 76.27 $\pm$ 4.1 \\
\hline
GK \citep{pmlr-v5-shervashidze09a} & 81.39 $\pm$ 1.7 & 55.65 $\pm$ 0.5 & 71.39 $\pm$ 0.3 & 62.49 $\pm$ 0.3 \\
\hline
RW \citep{10.5555/1756006.1859891} & 79.17 $\pm$ 2.1 & 55.91 $\pm$ 0.3 & 59.57 $\pm$ 0.1 & N/A \\
\hline
PK \citep{PK2015} & 76 $\pm$ 2.7 & 59.5 $\pm$ 2.4 & 73.68 $\pm$ 0.7 & 82.54 $\pm$ 0.5 \\
\hline
WL \citep{JMLR:v12:shervashidze11a} & 84.11 $\pm$ 1.9 & 57.97 $\pm$ 2.5 & 74.68 $\pm$ 0.5 & \textbf{84.46 $\pm$ 0.5} \\
\hline
IEGN \citep{maron2018invariant} & 84.61 $\pm$ 10 & 59.47 $\pm$ 7.3 & 75.19 $\pm$ 4.3 & 73.71 $\pm$ 2.6 \\
\hline
\hline
\textbf{MMF} & 86.31 $\pm$ 9.47 & 67.99 $\pm$ 8.55 & \textbf{78.72 $\pm$ 2.53} & 71.04 $\pm$ 1.53 \\
\hline
\end{tabular}
\end{center}
\vspace{-15pt}
\end{table*}

\begin{figure*}
\vspace{-2pt}
\begin{center}
\begin{tabular}{ccc}
\includegraphics[width=0.30\textwidth]{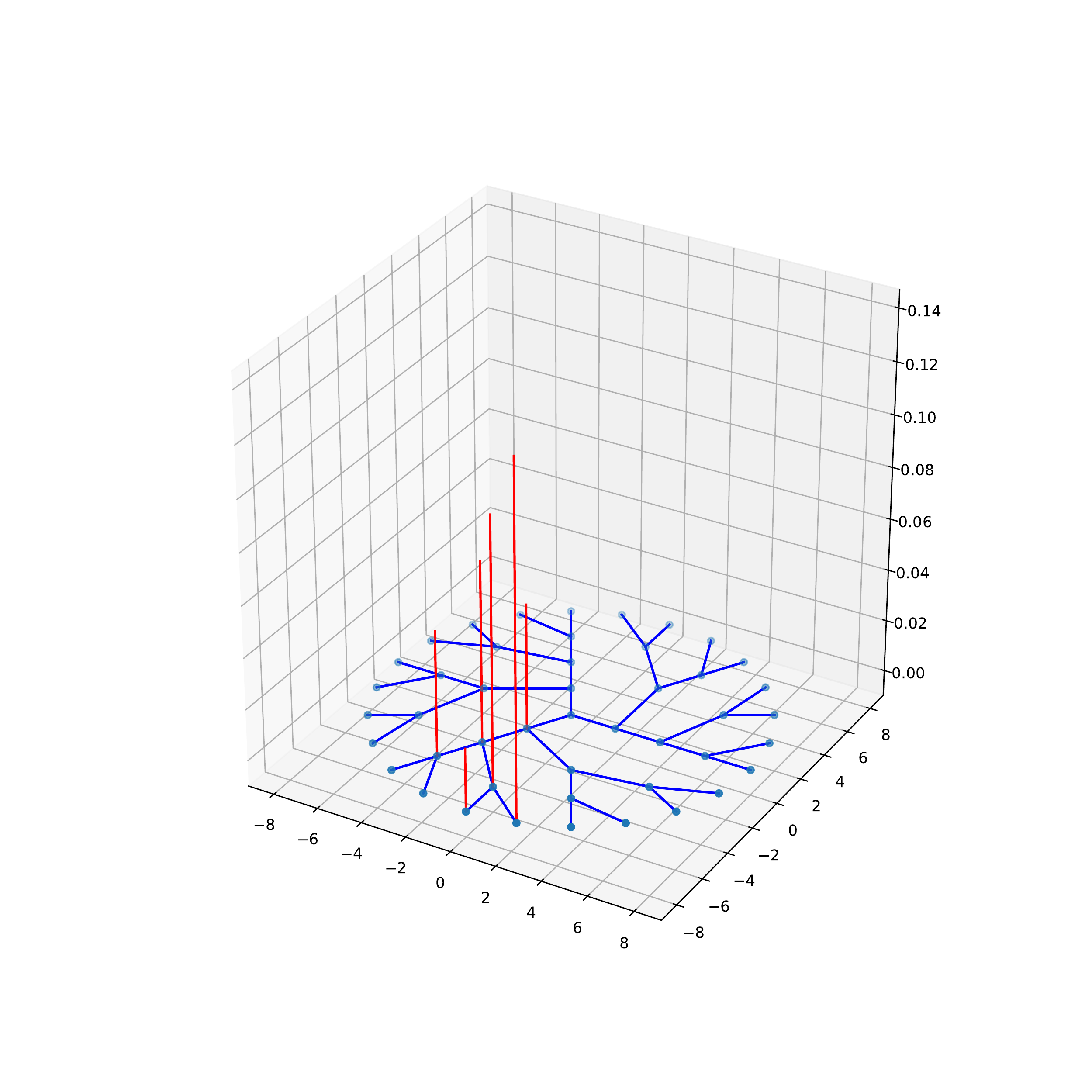}
\vspace{-15pt}
&
\includegraphics[width=0.30\textwidth]{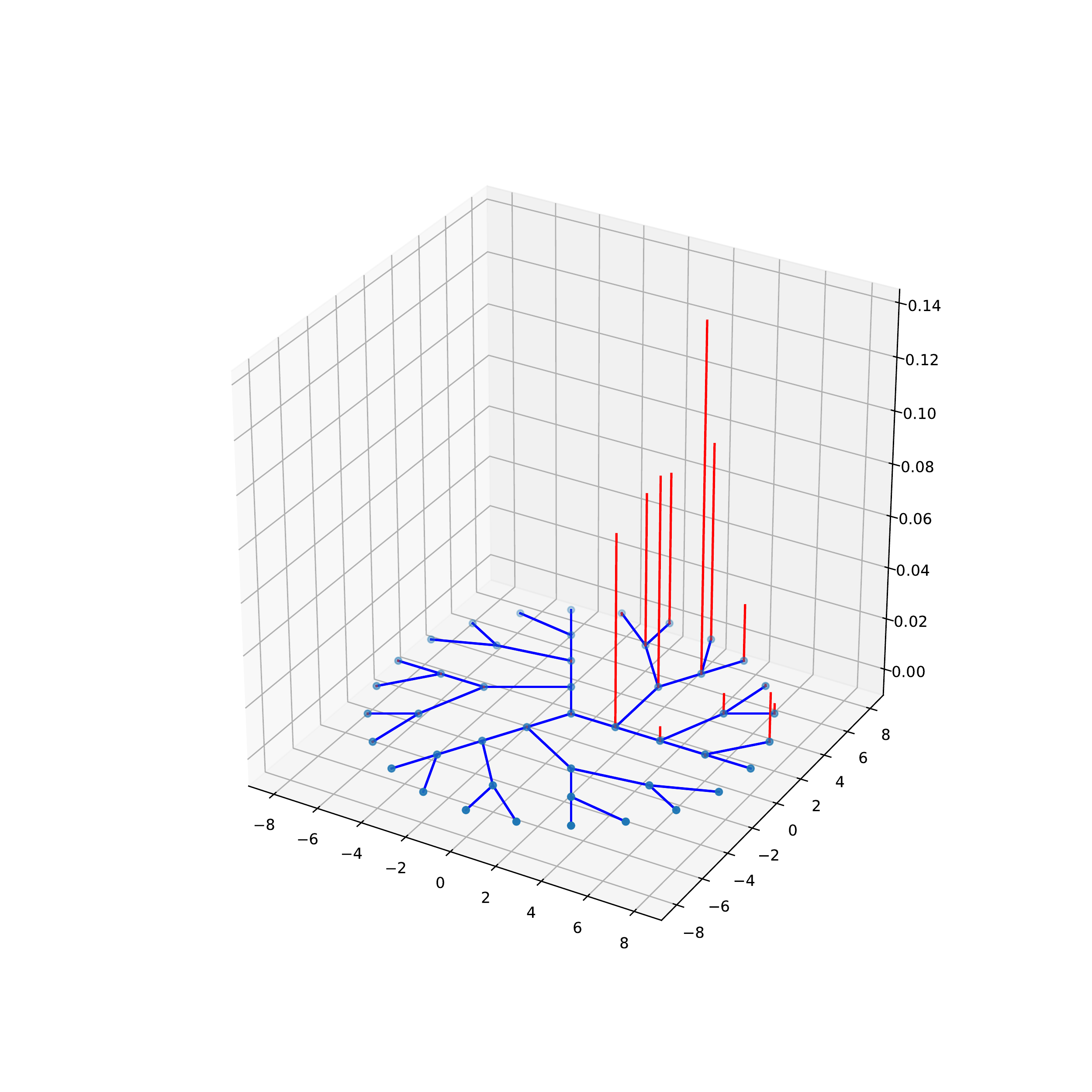}
\vspace{-2pt}
&
\includegraphics[width=0.30\textwidth]{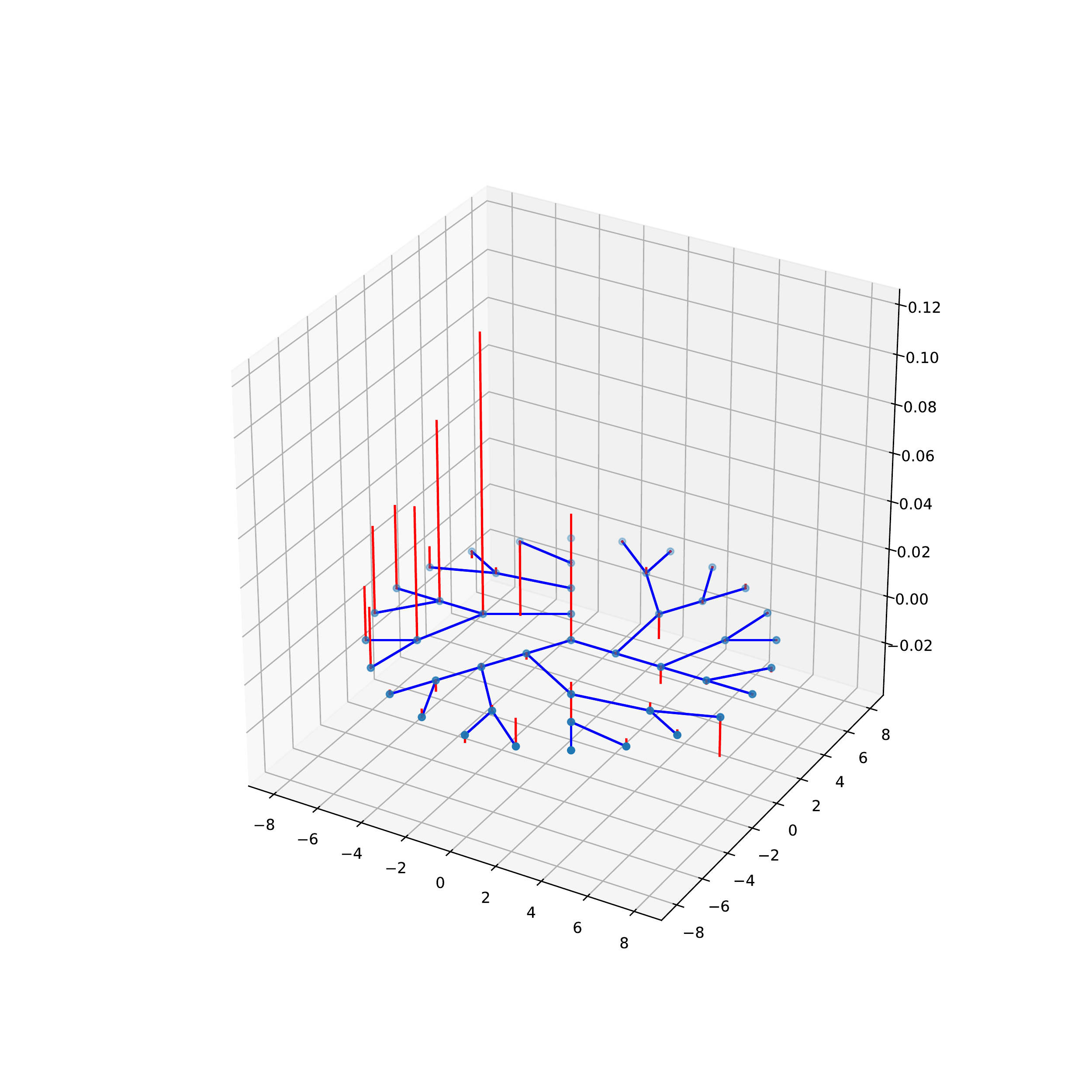}
\vspace{-2pt}
\\
$\ell = 1$
&
$\ell = 20$
&
$\ell = 39$
\end{tabular}
\vspace{-13pt}
\end{center}
\caption{\label{fig:wavelets-visual} Visualization of some of the wavelets on the Cayley tree of 46 vertices. 
The low index wavelets (low $\ell$) are highly localized, whereas the high index ones are smoother and spread out over large
parts of the graph.}
\vspace{-10pt}
\end{figure*}
\subsection{Node classification on citation graphs}

To evaluate the wavelet bases returned by our learnable MMF algorithm, we construct our wavelet networks 
(WNNs) as in Sec.~\ref{sec:networks} and apply it to the task of node classification on two citation graphs, 
Cora $(N = 2,708)$ and Citeseer $(N = 3,312)$ \citep{PSen2008} 
in which nodes and edges represent documents and citation links. 
Each document in Cora and Citeseer has an associated feature vector (of length  $1,433$ resp.~$3,703$)   
computed from word frequencies, and is classified into one of $7$ and $6$ classes, respectively. 
We factorize the normalized graph Laplacian by learnable MMF with $K = 16$ to obtain the wavelet bases. 
The resulting MMF wavelets are sparse, which makes it possible to run a fast transform 
on the node features by sparse matrix multiplication: only $4.69\%$ and $15.25\%$ of elements are non-zero in Citeseer and Cora, 
respectively. 
In constrast, Fourier bases given by eigendecomposition of the graph Laplacian are completely dense 
($100\%$ of elements are non-zero). 
We evaluate our WNNs with 3 different random splits of train/validation/test: (1) $20\%$/$20\%$/$60\%$ 
denoted as MMF$_1$, (2) $40\%$/$20\%$/$40\%$ denoted as MMF$_2$, and (3) $60\%$/$20\%$/$20\%$ denoted as MMF$_3$. 
The WNN learns to encode the whole graph with $6$ layers of spectral convolution and $100$ hidden dimensions 
for each node. During training, the network is only trained to predict the node labels in the training set. 
Hyperparameter searching is done on the validation set. 
The number of epochs is $256$ and we use the Adam optimization method \citep{Adam} 
with learning rate $\eta = 10^{-3}$. 
We report the final test accuracy for each split in Table \ref{tbl:node-classification}. \\ \\
We compare with several traditional methods and deep learning methods including other spectral graph convolution networks such as Spectral CNN, and graph wavelet neural networks (GWNN). Baseline results are taken from \citep{xu2018graph}. Our wavelet networks perform competitively against state-of-the-art methods in the field.

\subsection{Graph classification}

We also tested our WNNs on standard graph classification benchmarks including four bioinformatics datasets: 
(1) MUTAG, which is a dataset of 188 mutagenic aromatic and heteroaromatic nitro compounds with 7 discrete 
labels \citep{doi:10.1021/jm00106a046}; (2) PTC, which consists of 344 chemical compounds with 19 discrete 
labels that have been tested for positive or negative toxicity in lab rats \citep{10.1093/bioinformatics/btg130}; 
(3) PROTEINS, which contains 1,113 molecular graphs with binary labels, 
where nodes are secondary structure elements (SSEs) and there is an edge between two nodes if they are 
neighbors in the amino-acid sequence or in 3D space \citep{ProteinKernel}; 
(4) NCI1, which has 4,110 compounds with binary labels, each screened for activity against small cell 
lung cancer and ovarian cancer lines \citep{NCIDataset}. 
Each molecule is represented by an adjacency matrix, and we represent each atomic type as a one-hot vector 
and use them as the node features. \\ \\
We factorize all normalized graph Laplacian matrices in these datasets by MMF with $K = 2$ to obtain the wavelet bases. 
Again, MMF wavelets are \textbf{sparse} and suitable for fast transform via sparse matrix multiplication, 
with the following average percentages of non-zero elements for each dataset: 
$19.23\%$ (MUTAG), $18.18\%$ (PTC), $2.26\%$ (PROTEINS) and $11.43\%$ (NCI1). \\ \\
Our WNNs contain 6 layers of spectral convolution, 32 hidden units for each node, 
and are trained with 256 epochs by Adam optimization with an initial learning rate of $10^{-3}$. 
We follow the evaluation protocol of 10-fold cross-validation from \citep{Zhang2018AnED}. 
We compare our results to several deep learning methods and popular graph kernel methods. Baseline results are taken from \citep{maron2018invariant}.
Our WNNs outperform 7/8, 7/8, 8/8, and 2/8 baseline methods on MUTAG, PTC, PROTEINS, and NCI1, 
respectively (see Table \ref{tbl:graph-classification}).

\section{Software}

We implemented our learning algorithm for MMF and the wavelet networks by PyTorch deep learning framework \citep{NEURIPS2019_bdbca288}. We released our implementation at 
\begin{center}
\url{https://github.com/risilab/Learnable_MMF/}.
\end{center}
\section{Conclusions} \label{sec:Conclusion}

In this paper we introduced a general algorithm based on reinforcement learning and Stiefel manifold 
optimization to optimize Multiresolution Matrix Factorization (MMF). 
We find that the resulting learnable MMF consistently outperforms the existing greedy and heuristic 
MMF algorithms in factorizing and approximating hierarchical matrices. 
Based on the wavelet basis returned from our learning algorithm, we define a corresponding 
notion of spectral convolution 
and construct a wavelet neural network for graph learning problems. 
Thanks to the sparsity of the MMF wavelets, the wavelet network can be 
efficiently implemented with sparse matrix multiplication. 
We find that this combination of learnable MMF factorization and spectral wavelet network yields 
state of the art results on standard 
node classification and molecular graph classification.




\vskip 0.2in
\bibliography{paper}

\clearpage

\appendix

\section{Notation} \label{sec:Notation}

We define $[n] = \{1, 2, \dots, n\}$ as the set of the first $n$ natural numbers. We denote $\mI_n$ as the $n$ dimensional identity matrix. The group of $n$ dimensional orthogonal matrices is $\sS\sO(n)$. $\sA \cupdot \sB$ will denote the disjoint union of two sets $\sA$ and $\sB$, therefore $\sA_1 \cupdot \sA_2 \cupdot \dots \cupdot \sA_k = \sS$ is a partition of $\sS$. \\ \\
Given a matrix $\mA \in \mathbb{R}^{n \times n}$ and two sequences of indices $\vi = (i_1, \dots, i_k) \in [n]^k$ and $\vj = (j_1, \dots, j_k) \in [n]^k$ assuming that $i_1 < i_2 < \dots < i_k$ and $j_1 < j_2 < \dots < j_k$, $\mA_{\vi, \vj}$ will be the $k \times k$ matrix with entries $[\mA_{\vi, \vj}]_{x, y} = \mA_{i_x, j_y}$. Furthermore, $\mA_{i, :}$ and $\mA_{:, j}$ denote the $i$-th row and the $j$-th column of $\mA$, respectively. Given $\mA_1 \in \R^{n_1 \times m_1}$ and $\mA_2 \in \R^{n_2 \times m_2}$, $\mA_1 \oplus \mA_2$ is the $(n_1 + n_2) \times (m_1 + m_2)$ dimensional matrix with entries
$$[\mA_1 \oplus \mA_2]_{i, j} = 
\begin{cases} 
[\mA_1]_{i, j} & \text{if} \ \ i \leq n_1 \ \ \text{and} \ \ j \leq m_1 \\
[\mA_2]_{i - n_1, j - m_1} & \text{if} \ \ i > n_1 \ \ \text{and} \ \ j > m_1 \\
0 & \text{otherwise.}
\end{cases}$$
A matrix $\mA$ is said to be block diagonal if it is of the form
\begin{equation}
\mA = \mA_1 \oplus \mA_2 \oplus \dots \oplus \mA_p
\label{eq:block-matrix}
\end{equation}
for some sequence of smaller matrices $\mA_1, \dots, \mA_p$. For the generalized block diagonal matrix, we remove the restriction that each block in (\ref{eq:block-matrix}) must involve a contiguous set of indices, and introduce the notation
$$\mA = \oplus_{(i_1^1, \dots, i_{k_1}^1)} \mA_1 \oplus_{(i_1^2, \dots, i_{k_2}^2)} \mA_2 \dots \oplus_{(i_1^p, \dots, i_{k_p}^p)} \mA_p$$
in which
$$
\mA_{a, b} =
\begin{cases}
[\mA_u]_{q, r} & \text{if} \ \ i_q^u = a \ \ \text{and} \ \ i^u_r = b \ \ \text{for some} \ \ u, q, r, \\
0 & \text{otherwise.}
\end{cases}
$$
The Kronecker tensor product $\mA_1 \otimes \mA_2$ is an $n_1n_2 \times m_1m_2$ matrix whose elements are
$$[\mA_1 \otimes \mA_2]_{(i_1 - 1)n_2 + i_2, (j_1 - 1)m_2 + j_2} = [\mA_1]_{i_1, j_1} \cdot [\mA_2]_{i_2, j_2},$$
with the obvious generalization to $p$-fold products $\mA_1 \otimes \mA_2 \otimes \dots \otimes \mA_p$. We denote $\mA^{\otimes p}$ as the $p$-fold product $\mA \otimes \mA \otimes \dots \otimes \mA$. \\ \\
A matrix $\mA \in \R^{n \times n}$ is called skew-symmetric (or anti-symmetric) if $\mA^T = -\mA$. The Euclidean inner product between two matrices $\mA \in \R^{m \times n}$ and $\mB \in \R^{m \times n}$ is defined as 
$$\langle \mA, \mB \rangle = \sum_{j, k} \mA_{j, k} \mB_{j, k} = \text{trace}(\mA^T \mB).$$
The Frobenius norm of $\mA$ is defined as $||\mA||_F = \sqrt{\sum_{i, j} \mA_{i, j}^2}$.
\section{Multiresolution Matrix Factorization} \label{sec:MMF}

\subsection{Background} \label{sec:Background}

Most commonly used matrix factorization algorithms, such as principal component analysis (PCA), singular value decomposition (SVD), or non-negative matrix factorization (NMF) are inherently single-level algorithms. Saying that a symmetric matrix $\mA \in \R^{n \times n}$ is of rank $r \ll n$ means that it can be expressed in terms of a dictionary of $r$ mutually orthogonal unit vectors $\{u_1, u_2, \dots, u_r\}$ in the form
$$\mA = \sum_{i = 1}^r \lambda_i u_i u_i^T,$$
where $u_1, \dots, u_r$ are the normalized eigenvectors of $A$ and $\lambda_1, \dots, \lambda_r$ are the corresponding eigenvalues. This is the decomposition that PCA finds, and it corresponds to factorizing $\mA$ in the form
\begin{equation}
\mA = \mU^T \mH \mU,
\label{eq:eigen}
\end{equation}
where $\mU$ is an orthogonal matrix and $\mH$ is a diagonal matrix with the eigenvalues of $\mA$ on its diagonal. The drawback of PCA is that eigenvectors are almost always dense, while matrices occuring in learning problems, especially those related to graphs, often have strong locality properties, in the sense that they are more closely couple certain clusters of nearby coordinates than those farther apart with respect to the underlying topology. In such cases, modeling $A$ in terms of a basis of global eigenfunctions is both computationally wasteful and conceptually unreasonable: a localized dictionary would be more appropriate. In contrast to PCA, \citep{DBLP:conf/icml/KondorTG14} proposed \textit{Multiresolution Matrix Factorization}, or MMF for short, to construct a sparse hierarchical system of $L$-level dictionaries. The corresponding matrix factorization is of the form
$$\mA = \mU_1^T \mU_2^T \dots \mU_L^T \mH \mU_L \dots \mU_2 \mU_1,$$
where $\mH$ is close to diagonal and $\mU_1, \dots, \mU_L$ are sparse orthogonal matrices with the following constraints:
\begin{enumerate}
\item Each $\mU_\ell$ is $k$-point rotation for some small $k$, meaning that it only rotates $k$ coordinates at a time. Formally, Def.~\ref{def:rotation-matrix} defines and Fig.~\ref{fig:rotation-matrix} shows an example of the $k$-point rotation matrix. 
\item There is a nested sequence of sets $\sS_L \subseteq \cdots \subseteq \sS_1 \subseteq \sS_0 = [n]$ such that the coordinates rotated by $\mU_\ell$ are a subset of $\sS_\ell$.
\item $\mH$ is an $\sS_L$-core-diagonal matrix that is formally defined in Def.~\ref{def:core-diagonal}.
\end{enumerate}
\begin{definition} \label{def:rotation-matrix}
We say that $\mU \in \R^{n \times n}$ is an \textbf{elementary rotation of order $k$} (also called as a $k$-point rotation) if it is an orthogonal matrix of the form
$$\mU = \mI_{n - k} \oplus_{(i_1, \cdots, i_k)} \mO$$
for some $\sI = \{i_1, \cdots, i_k\} \subseteq [n]$ and $\mO \in \sS\sO(k)$. We denote the set of all such matrices as $\sS\sO_k(n)$.
\end{definition}
The simplest case are second order rotations, or called Givens rotations, which are of the form
\begin{equation}
\mU = \mI_{n - 2} \oplus_{(i, j)} \mO = 
\begin{pmatrix}
\cdot & & & & \\
& \cos(\theta) & & -\sin(\theta) & \\
& & \cdot & & \\
& \sin(\theta) & & \cos(\theta) & \\
& & & & \cdot \\
\end{pmatrix},
\label{eq:givens}
\end{equation}
where the dots denote the identity that apart from rows/columns $i$ and $j$, and $\mO \in \sS\sO(2)$ is the rotation matrix of some angle $\theta \in [0, 2\pi)$. Indeed, Jacobi's algorithm for diagonalizing symmetric matrices \citep{Jacobi+1846+51+94} is a special case of MMF factorization over Givens rotations. \\ \\
\begin{definition} \label{def:core-diagonal}
Given a set $\sS \subseteq [n]$, we say that a matrix $\mH \in \R^{n \times n}$ is $\sS$-core-diagonal if $\mH_{i, j} = 0$ unless $i, j \in \sS$ or $i = j$. Equivalently, $\mH$ is $\sS$-core-diagonal if it can be written in the form $\mH = \mD \oplus_{\sS} \overline{\mH}$, for some $\overline{H} \in \R^{|\sS| \times |\sS|}$ and $\mD$ is diagonal. We denote the set of all $\sS$-core-diagonal symmetric matrices of dimension $n$ as $\sH^{\sS}_n$.
\end{definition}
In general, finding the best MMF factorization to a symmetric matrix $\mA$ requires solving
$$
\min_{\substack{\sS_L \subseteq \cdots \subseteq \sS_1 \subseteq \sS_0 = [n] \\ \mH \in \sH^{\sS_L}_n; \mU_1, \dots, \mU_L \in \sO}} || \mA - \mU_1^T \dots \mU_L^T \mH \mU_L \dots \mU_1 ||.
$$
\subsection{Multiresolution analysis} \label{sec:mra}

We formally define MMF in Defs.~\ref{def:mmf} and \ref{def:factorizable}. Furthermore, \citep{DBLP:conf/icml/KondorTG14} has shown that MMF mirrors the classical theory of multiresolution analysis (MRA) on the real line \citep{192463} to discrete spaces. The functional analytic view of wavelets is provided by MRA, which, similarly to Fourier analysis, is a way of filtering some function space into a sequence of subspaces
\begin{equation}
\dots \subset \sV_{-1} \subset \sV_0 \subset \sV_1 \subset \sV_2 \subset \dots
\label{eq:subspace-sequence}
\end{equation}
\begin{definition} \label{def:mmf}
Given an appropriate subset $\sO$ of the group $\sS\sO(n)$ of $n$-dimensional rotation matrices, a depth parameter $L \in \mathbb{N}$, and a sequence of integers $n = d_0 \ge d_1 \ge d_2 \ge \dots \ge d_L \ge 1$, a \textbf{Multiresolution Matrix Factorization (MMF)} of a symmetric matrix $\mA \in \R^{n \times n}$ over $\sO$ is a factorization of the form
\begin{equation} \label{eq:mmf}
\mA = \mU_1^T \mU_2^T \dots \mU_L^T \mH \mU_L \dots \mU_2 \mU_1,
\end{equation}
where each $\mU_\ell \in \sO$ satisfies $[\mU_\ell]_{[n] \setminus \sS_{\ell - 1}, [n] \setminus \sS_{\ell - 1}} = \mI_{n - d_\ell}$ for some nested sequence of sets $\sS_L \subseteq \cdots \subseteq \sS_1 \subseteq \sS_0 = [n]$ with $|\sS_\ell| = d_\ell$, and $\mH \in \sH^{\sS_L}_n$ is an $\sS_L$-core-diagonal matrix.
\end{definition}
\begin{definition} \label{def:factorizable}
We say that a symmetric matrix $\mA \in \R^{n \times n}$ is \textbf{fully multiresolution factorizable} over $\sO \subset \sS\sO(n)$ with $(d_1, \dots, d_L)$ if it has a decomposition of the form described in Def.~\ref{def:mmf}.
\end{definition}
\noindent
However, it is best to conceptualize (\ref{eq:subspace-sequence}) as an iterative process of splitting each $\sV_\ell$ into the orthogonal sum $\sV_\ell = \sV_{\ell + 1} \oplus \sW_{\ell + 1}$ of a smoother part $\sV_{\ell + 1}$, called the \textit{approximation space}; and a rougher part $\sW_{\ell + 1}$, called the \textit{detail space} (see Fig.~\ref{fig:subspaces}). Each $\sV_\ell$ has an orthonormal basis $\Phi_\ell \triangleq \{\phi_m^\ell\}_m$ in which each $\phi$ is called a \textit{father} wavelet. Each complementary space $\sW_\ell$ is also spanned by an orthonormal basis $\Psi_\ell \triangleq \{\psi_m^\ell\}_m$ in which each $\psi$ is called a \textit{mother} wavelet. In MMF, each individual rotation $\mU_\ell: \sV_{\ell - 1} \rightarrow \sV_\ell \oplus \sW_\ell$ is a sparse basis transform that expresses $\Phi_\ell \cup \Psi_\ell$ in the previous basis $\Phi_{\ell - 1}$ such that:
$$\phi_m^\ell = \sum_{i = 1}^{\text{dim}(\sV_{\ell - 1})} [\mU_\ell]_{m, i} \phi_i^{\ell - 1},$$
$$\psi_m^\ell = \sum_{i = 1}^{\text{dim}(\sV_{\ell - 1})} [\mU_\ell]_{m + \text{dim}(\sV_{\ell - 1}), i} \phi_i^{\ell - 1},$$
in which $\Phi_0$ is the standard basis, i.e. $\phi_m^0 = e_m$; and $\text{dim}(\sV_\ell) = d_\ell = |\sS_\ell|$. In the $\Phi_1 \cup \Psi_1$ basis, $\mA$ compresses into $\mA_1 = \mU_1\mA\mU_1^T$. In the $\Phi_2 \cup \Psi_2 \cup \Psi_1$ basis, it becomes $\mA_2 = \mU_2\mU_1\mA\mU_1^T\mU_2^T$, and so on. Finally, in the $\Phi_L \cup \Psi_L \cup \dots \cup \Psi_1$ basis, it takes on the form $\mA_L = \mH = \mU_L \dots \mU_2\mU_1 \mA \mU_1^T\mU_2^T \dots \mU_L^T$ that consists of four distinct blocks (supposingly that we permute the rows/columns accordingly):
$$\mH = \begin{pmatrix} \mH_{\Phi, \Phi} & \mH_{\Phi, \Psi} \\ \mH_{\Psi, \Phi} & \mH_{\Psi, \Psi} \end{pmatrix},$$
where $\mH_{\Phi, \Phi} \in \R^{\text{dim}(\sV_L) \times \text{dim}(\sV_L)}$ is effectively $\mA$ compressed to $\sV_L$, $\mH_{\Phi, \Psi} = \mH_{\Psi, \Phi}^T = 0$ and $\mH_{\Psi, \Psi}$ is diagonal. MMF approximates $\mA$ in the form
$$\mA \approx \sum_{i, j = 1}^{d_L} h_{i, j} \phi_i^L {\phi_j^L}^T + \sum_{\ell = 1}^L \sum_{m = 1}^{d_\ell} c_m^\ell \psi_m^\ell {\psi_m^\ell}^T,$$
where $h_{i, j}$ coefficients are the entries of the $\mH_{\Phi, \Phi}$ block, and $c_m^\ell = \langle \psi_m^\ell, \mA \psi_m^\ell \rangle$ wavelet frequencies are the diagonal elements of the $\mH_{\Psi, \Psi}$ block. \\ \\
In particular, the dictionary vectors corresponding to certain rows of $\mU_1$ are interpreted as level one wavelets, the dictionary vectors corresponding to certain rows of $\mU_2\mU_1$ are interpreted as level two wavelets, and so on (see Section \ref{sec:mra}). One thing that is immediately clear is that whereas Eq.~(\ref{eq:eigen}) diagonalizes $\mA$ in a single step, multiresolution analysis will involve a sequence of basis transforms $\mU_1, \mU_2, \dots, \mU_L$, transforming $\mA$ step by step as
\begin{equation}
\mA \rightarrow \mU_1\mA\mU_1^T \rightarrow \mU_2\mU_1\mA\mU_1^T\mU_2^T \rightarrow \dots \rightarrow \mU_L \dots \mU_2\mU_1\mA\mU_1^T\mU_2^T \dots \mU_L^T,
\label{eq:mmf-transform}
\end{equation}
so the corresponding matrix factorization must be a multilevel factorization
\begin{equation}
\mA \approx \mU_1^T \mU_2^T \dots \mU_\ell^T \mH \mU_\ell \dots \mU_2 \mU_1.
\label{eq:mmf-factorization}
\end{equation}
Fig.~\ref{fig:mmf-transform} depicts the multiresolution transform of MMF as in Eq.~(\ref{eq:mmf-transform}). Fig.~\ref{fig:mmf-factorization} illustrates the corresponding factorization as in Eq.~(\ref{eq:mmf-factorization}).
\begin{figure}[t]
$$
\xymatrix{
L_2(\sX) \ar[r] & \cdots \ar[r] & \sV_0 \ar[r] \ar[dr] & \sV_1 \ar[r] \ar[dr] & \sV_2 \ar[r] \ar[dr] & \cdots \\
& & & \sW_1 & \sW_2 & \sW_3
}
$$
\caption{\label{fig:subspaces}
Multiresolution analysis splits each function space $\sV_0, \sV_1, \dots$ into the direct sum of a smoother part $\sV_{\ell + 1}$ and a rougher part $\sW_{\ell + 1}$.
}
\end{figure}
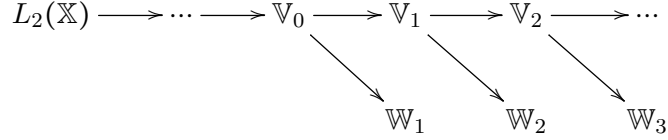
\newcommand{\tikmxA}[1]{
\bigg(\,\begin{tikzpicture}[baseline=-17, scale=0.06]
\filldraw[gray] (0,0) rectangle +(#1,-#1); \end{tikzpicture}\,\bigg)}

\newcommand{\tikmxB}[2]{
\bigg(\,\begin{tikzpicture}[baseline=-17, scale=0.06]
\filldraw[gray] (0,0) rectangle +(#1,-#1);
\foreach \i in {#1,...,#2}{
\filldraw[gray] (\i,-\i) rectangle +(1,-1);}
\end{tikzpicture}\,\bigg)}

\newcommand{\tikmxC}[2]{
\bigg(\,\begin{tikzpicture}[baseline=-17, scale=0.06]
\draw (0,0) rectangle +(17,-17);\draw (10,-9) node {#1}; \end{tikzpicture} \,\bigg)}

\newcommand{\tikmxD}[3]{
\bigg(\,\begin{tikzpicture}[baseline=-17, scale=0.06]
\filldraw[gray] (#1,-#1) rectangle +(#2,-#2);
\foreach \i in {0,...,#3}{
\filldraw[gray] (\i,-\i) rectangle +(1,-1);}
\end{tikzpicture}\,\bigg)}

\begin{figure}[t]
\[
\mI_{n - k} \oplus_{(i_1, .., i_k)} \mO = 
\,\Pi\, \underset{\displaystyle \mU}{\tikmxD{7}{4}{17}} \,\Pi^\top
\]\\ \vspace{-30pt}\mbox{}
\caption{\label{fig:rotation-matrix}
A rotation matrix of order $k$. The purpose of permutation matrix $\Pi$ is solely to ensure that the blocks of the matrices appear contiguous in the figure. In this case, $n = 17$ and $k = 4$.
}
\end{figure}

\begin{figure}[t]
\[
\,\Pi\, \underset{\displaystyle \mA}{\tikmxA{17}} \,\Pi^\top \xrightarrow{~~U_1~~} 
\underset{\displaystyle \mA_1 = \mU_1 \mA \mU_1^T}{\tikmxB{14}{17}}\xrightarrow{~~U_2~~}
\underset{\displaystyle \mA_2 = \mU_2 \mA_1 \mU_2^T}{\tikmxB{10}{17}}\xrightarrow{~~~~}
\ldots\ldots\xrightarrow{~~~~}
\underset{\displaystyle \mA_L = \mH}{\tikmxB{7}{17}}
\]\\ \vspace{-30pt}\mbox{}
\caption{\label{fig:mmf-transform}
MMF can be thought of as a process of successively compressing $\mA$ to size $d_1 \times d_1$, $d_2 \times d_2$, etc. (plus the diagonal entries) down to the final $d_L \times d_L$ core-diagonal matrix $\mH$ (see Def.~\ref{def:mmf}). The role of permutation matrix $\Pi$ is purely for the ease of visualization (as in Fig.~\ref{fig:rotation-matrix}).
}
\end{figure}

\begin{figure}[t]
\[
\,\Pi\, \underset{\displaystyle A}{\tikmxA{17}} \,\Pi^\top 
\approx
\underset{\displaystyle U_1^T}{\tikmxD{7}{4}{17}}
\ldots
\underset{\displaystyle U_L^T}{\tikmxD{14}{4}{17}}
\underset{\displaystyle H}{\tikmxD{0}{7}{17}}
\underset{\displaystyle U_L}{\tikmxD{14}{4}{17}}
\ldots
\underset{\displaystyle U_1}{\tikmxD{7}{4}{17}}
\]\\ \vspace{-30pt}\mbox{}
\caption{\label{fig:mmf-factorization}
Matrix approximation as in Eq.~\ref{eq:mmf}. In this figure, the core block size of each rotation matrix $\mU_\ell$ and $\mH$ are $k \times k = 4 \times 4$ and $d_L \times d_L = 8 \times 8$, respectively. Permutation matrix $\Pi$ is only for visualization (as in Figs.~\ref{fig:rotation-matrix}~\ref{fig:mmf-transform}).
}
\end{figure}
\subsection{Optimization by heuristics} \label{sec:Optimization}

Heuristically, factorizing $\mA$ can be approximated by an iterative process that starts by setting $\mA_0 = \mA$ and $\sS_1 = [n]$, and then executes the following steps for each resolution level $\ell \in \{1, \dots, L\}$:
\begin{enumerate}
\item Given $\mA_{\ell - 1}$, select $k$ indices $\sI_\ell = \{i_1, \dots, i_k\} \subset \sS_{\ell - 1}$ of rows/columns of the active submatrix $[\mA_{\ell - 1}]_{\sS_{\ell - 1}, \sS_{\ell - 1}}$ that are highly correlated with each other.
\item Find the corresponding $k$-point rotation $\mU_\ell$ to $\sI_\ell$, and compute $\mA_\ell = \mU_\ell \mA_{\ell - 1} \mU_\ell^T$ that brings the submatrix $[\mA_{\ell - 1}]_{\sI_\ell, \sI_\ell}$ close to diagonal. In the last level, we set $\mH = \mA_L$ (see Fig.~\ref{fig:mmf-transform}).
\item Determine the set of coordinates $\sT_\ell \subseteq \sS_{\ell - 1}$ that are to be designated wavelets at this level, and eliminate them from the active set by setting $\sS_\ell = \sS_{\ell - 1} \setminus \sT_\ell$.
\end{enumerate}

\section{Stiefel Manifold Optimization} \label{sec:proof}

In order to solve the MMF optimization problem, we consider the following generic optimization with orthogonality constraints:
\begin{equation}
\min_{\mX \in \R^{n \times p}} \mathcal{F}(\mX), \ \ \text{s.t.} \ \ \mX^T \mX = \mI_p,
\label{eq:opt-prob}
\end{equation}
We identify tangent vectors to the manifold with $n \times p$ matrices. We denote the tangent space at $\mX$ as $\mathcal{T}_{\mX} \mathcal{V}_p(\R^n)$. Lemma \ref{lemma:tangent} characterizes vectors in the tangent space.
\begin{lemma}
Any $\mZ \in \mathcal{T}_{\mX} \mathcal{V}_p(\R^n)$, then $\mZ$ (as an element of $\R^{n \times p}$) satisfies
$$\mZ^T \mX + \mX^T \mZ = 0,$$
where $\mZ^T \mX$ is a skew-symmetric $p \times p$ matrix.
\label{lemma:tangent}
\end{lemma}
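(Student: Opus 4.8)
The plan is to use the standard characterization of tangent vectors to an embedded submanifold as velocity vectors of smooth curves lying in the manifold. First I would recall that $\mathcal{V}_p(\R^n)$ is a smooth embedded submanifold of $\R^{n\times p}$: it is the level set $\{\mX : \mX^T\mX = \mI_p\}$ of the map $\mX \mapsto \mX^T\mX$ from (full column rank) $n\times p$ matrices into the space of symmetric $p\times p$ matrices, and this map is a submersion at every point of the level set. Consequently every $\mZ \in \mathcal{T}_{\mX}\mathcal{V}_p(\R^n)$ can be written as $\mZ = \gamma'(0)$ for some smooth curve $\gamma : (-\epsilon,\epsilon) \to \mathcal{V}_p(\R^n)$ with $\gamma(0) = \mX$.

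Next I would simply differentiate the defining constraint along such a curve. Since $\gamma(t)$ stays on the manifold, $\gamma(t)^T\gamma(t) = \mI_p$ holds identically in $t$; differentiating at $t = 0$ by the product rule gives $\gamma'(0)^T\gamma(0) + \gamma(0)^T\gamma'(0) = 0$, i.e. $\mZ^T\mX + \mX^T\mZ = 0$, which is the claimed identity. To obtain the skew-symmetry statement, note that $\mX^T\mZ = (\mZ^T\mX)^T$, so the identity reads $\mZ^T\mX + (\mZ^T\mX)^T = 0$; hence $(\mZ^T\mX)^T = -\,\mZ^T\mX$, that is, $\mZ^T\mX$ is a skew-symmetric $p\times p$ matrix.

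There is essentially no hard step here; the only point deserving a word of care is the justification that a tangent vector is realized by a curve that actually remains on the manifold (so that the differentiation above is legitimate). This is exactly where the submersion/implicit-function-theorem argument enters: one can either invoke the general fact for regular level sets, or verify directly that the linear space of $\mZ$ satisfying $\mZ^T\mX + \mX^T\mZ = 0$ has dimension $np - \tfrac12 p(p+1) = \dim\mathcal{V}_p(\R^n)$, so that the inclusion $\mathcal{T}_{\mX}\mathcal{V}_p(\R^n) \subseteq \{\mZ : \mZ^T\mX + \mX^T\mZ = 0\}$ is in fact an equality. For the present lemma only the containment direction is needed, so the curve argument alone suffices.
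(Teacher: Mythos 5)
Your proposal is correct and follows essentially the same route as the paper's proof: differentiate the constraint $\mY(t)^T\mY(t)=\mI_p$ along a curve in the manifold through $\mX$ and evaluate at $t=0$. The extra care you take in justifying that every tangent vector is realized by such a curve (via the submersion/level-set argument) is a welcome but minor refinement of the paper's argument.
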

\begin{proof}
Let $\mY(t)$ be a curve in $\mathcal{V}_p(\R^n)$ that starts from $\mX$. We have:
\begin{equation}
\mY^T(t)\mY(t) = \mI_p.
\label{eq:identity}
\end{equation}
We differentiate two sides of Eq.~(\ref{eq:identity}) with respect to $t$:
$$\frac{d}{dt}(\mY^T(t)\mY(t)) = 0$$
that leads to:
$$\bigg(\frac{d\mY}{dt}(0)\bigg)^T \mY(0) + \mY(0)^T \frac{d\mY}{dt}(0) = 0$$
at $t = 0$. Recall that by definition, $\mY(0) = \mX$ and $\frac{d\mY}{dt}(0)$ is any element of the tangent space at $\mX$. Therefore, we arrive at $\mZ^T\mX + \mX^T\mZ = 0$.
\end{proof}
Suppose that $\mathcal{F}$ is a differentiable function. The gradient of $\mathcal{F}$ with respect to $\mX$ is denoted by $\mG \triangleq \mathcal{D}\mathcal{F}_{\mX} \triangleq \big(\frac{\partial \mathcal{F}(\mX)}{\partial \mX_{i, j}}\big)$. The derivative of $\mathcal{F}$ at $\mX$ in a direction $\mZ$ is
$$\mathcal{D}\mathcal{F}_{\mX}(\mZ) \triangleq \lim_{t \rightarrow 0} \frac{\mathcal{F}(\mX + t\mZ) - \mathcal{F}(\mX)}{t} = \langle \mG, \mZ \rangle$$
Since the matrix $\mX^T \mX$ is symmetric, the Lagrangian multiplier $\Lambda$ corresponding to $\mX^T\mX = \mI_p$ is a symmetric matrix. The Lagrangian function of problem (\ref{eq:opt-prob}) is
\begin{equation}
\mathcal{L}(\mX, \mLambda) = \mathcal{F}(\mX) - \frac{1}{2} \text{trace}(\mLambda (\mX^T \mX - \mI_p))
\label{eq:lagrangian}
\end{equation}
\begin{lemma}
Suppose that $\mX$ is a local minimizer of problem (\ref{eq:opt-prob}). Then $\mX$ satisfies the first-order optimality conditions $\mathcal{D}_{\mX}\mathcal{L}(\mX, \mLambda) = \mG - \mX\mG^T\mX = 0$ and $\mX^T\mX = \mI_p$ with the associated Lagrangian multiplier $\mLambda = \mG^T\mX$. Define $\displaystyle \nabla \mathcal{F}(\mX) \triangleq \mG - \mX \mG^T \mX$ and $\mA \triangleq \mG \mX^T - \mX \mG^T$. Then $\displaystyle \nabla \mathcal{F} = \mA \mX$. Moreover, $\displaystyle \nabla \mathcal{F} = 0$ if and only if $\mA = 0$.
\label{lemma:first-order-condition}
\end{lemma}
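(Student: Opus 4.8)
The plan is to treat the three assertions in the lemma separately. Only the first-order optimality conditions use the hypothesis that $\mX$ is a local minimizer; the identity $\nabla\mathcal{F} = \mA\mX$ and the equivalence $\nabla\mathcal{F} = 0 \iff \mA = 0$ are purely algebraic and hold at every feasible point.

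For the optimality conditions I would start from the Lagrangian (\ref{eq:lagrangian}). Since the constraint $\mX^T\mX - \mI_p = 0$ is valued in the symmetric matrices, the multiplier $\mLambda$ may be taken symmetric; then $\mathcal{D}_{\mX}\,\text{trace}\big(\mLambda(\mX^T\mX - \mI_p)\big) = \mX(\mLambda + \mLambda^T) = 2\mX\mLambda$, so $\mathcal{D}_{\mX}\mathcal{L}(\mX,\mLambda) = \mG - \mX\mLambda$. That a local minimizer of (\ref{eq:opt-prob}) satisfies $\mathcal{D}_{\mX}\mathcal{L} = 0$ for some such $\mLambda$ can be obtained from the Lagrange-multiplier theorem (the Stiefel manifold is a smooth embedded submanifold of $\R^{n\times p}$, so the constraint qualification is automatic) or, to stay self-contained, directly from Lemma~\ref{lemma:tangent}: at a minimizer $\langle\mG,\mZ\rangle = 0$ for all $\mZ$ with $\mZ^T\mX + \mX^T\mZ = 0$, and the orthogonal complement of this tangent space in $\R^{n\times p}$ is exactly $\{\mX\mLambda : \mLambda^T = \mLambda\}$ --- containment is a one-line trace identity (for $\mLambda$ symmetric and $\mX^T\mZ$ skew, $\langle\mX\mLambda,\mZ\rangle = \text{trace}(\mLambda\mX^T\mZ) = 0$) and equality follows from the dimension count $\big(np - \tfrac12 p(p+1)\big) + \tfrac12 p(p+1) = np$. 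Either way $\mG = \mX\mLambda$; left-multiplying by $\mX^T$ and using $\mX^T\mX = \mI_p$ gives $\mLambda = \mX^T\mG$, hence $\mLambda = \mG^T\mX$ by symmetry, and substituting back yields $\mathcal{D}_{\mX}\mathcal{L}(\mX,\mLambda) = \mG - \mX\mG^T\mX = 0$.

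The identity $\nabla\mathcal{F} = \mA\mX$ is then immediate from $\mX^T\mX = \mI_p$: $\mA\mX = (\mG\mX^T - \mX\mG^T)\mX = \mG(\mX^T\mX) - \mX\mG^T\mX = \mG - \mX\mG^T\mX = \nabla\mathcal{F}(\mX)$. For the equivalence, $\mA = 0$ trivially forces $\nabla\mathcal{F} = \mA\mX = 0$. Conversely, if $\mG - \mX\mG^T\mX = 0$, i.e. $\mG = \mX\mG^T\mX$, left-multiplying by $\mX^T$ gives $\mX^T\mG = \mG^T\mX$; writing $\mS := \mX^T\mG = \mG^T\mX$ (a symmetric $p\times p$ matrix), we get $\mG = \mX\mS$ and $\mG^T = \mS\mX^T$, whence $\mA = \mX\mS\mX^T - \mX\mS\mX^T = 0$.

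The only step that is more than formal manipulation is this last converse implication. One cannot read off $\mA = 0$ from $\mA\mX = 0$ directly, because $\mX\mX^T \ne \mI_n$ once $p < n$; the point is that vanishing of $\nabla\mathcal{F}$ forces $\mG$ to lie in the column span of $\mX$ with a \emph{symmetric} coordinate matrix $\mS = \mX^T\mG = \mG^T\mX$, and it is exactly this symmetry that makes $\mA$ collapse. A secondary point requiring a little care is the legitimacy of the Lagrange-multiplier step, which is why I would route it through Lemma~\ref{lemma:tangent} rather than quoting KKT as a black box.
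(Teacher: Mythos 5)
Your proof is correct, and its computational core (differentiate the Lagrangian, left-multiply $\mG=\mX\mLambda$ by $\mX^T$, invoke symmetry of $\mLambda$ to get $\mLambda=\mG^T\mX$, then check $\mA\mX=\mG-\mX\mG^T\mX$) is the same as the paper's. Where you genuinely go beyond the paper is in the two places it is terse. First, the paper simply writes $\mathcal{D}_\mX\mathcal{L}(\mX,\mLambda)=\mG-\mX\mLambda=0$ at a local minimizer without explaining why a symmetric multiplier exists; your route through Lemma~\ref{lemma:tangent} --- $\mG$ orthogonal to the tangent space, plus the identification of the normal space as $\{\mX\mLambda:\mLambda^T=\mLambda\}$ via the trace identity and the dimension count $np-\tfrac12 p(p+1)+\tfrac12 p(p+1)=np$ --- supplies the missing constraint-qualification argument (it does implicitly use that the tangent space is \emph{all} of $\{\mZ:\mZ^T\mX+\mX^T\mZ=0\}$, not just contained in it, but that follows from the manifold dimension the paper itself records). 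Second, the paper dismisses the equivalence $\nabla\mathcal{F}=0\iff\mA=0$ as ``trivial,'' whereas you correctly observe that the nontrivial direction cannot be read off from $\mA\mX=0$ alone (since $\mX\mX^T\neq\mI_n$ for $p<n$), and your argument --- $\mG=\mX\mG^T\mX$ forces $\mS:=\mX^T\mG=\mG^T\mX$ symmetric, hence $\mG\mX^T=\mX\mS\mX^T=\mX\mG^T$ and $\mA=0$ --- is exactly the right way to close it. In short: same skeleton as the paper, with the two load-bearing gaps properly filled.
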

\begin{proof}
Since $\mX \in \mathcal{V}_p(\R^n)$, we have $\mX^T\mX = \mI_p$. We differentiate both sides of the Lagrangian function:
$$\mathcal{D}_\mX \mathcal{L}(\mX, \mLambda) = \mathcal{D}\mathcal{F}(\mX) - \mX \mLambda = 0.$$
Recall that by definition, $\mG \triangleq \mathcal{D}\mathcal{F}(\mX)$, we have
\begin{equation}
\mathcal{D}_\mX \mathcal{L}(\mX, \mLambda) = \mG - \mX \mLambda = 0.
\end{equation}
Multiplying both sides by $\mX^T$, we get $\mX^T\mG - \mX^T\mX \mLambda = 0$ that leads to $\mX^T\mG - \mLambda = 0$ or $\mLambda = \mX^T\mG$. Since the matrix $\mX^T\mX$ is symmetric, the Lagrangian multiplier $\mLambda$ correspoding to $\mX^T\mX = \mI_p$ is a symmetric matrix. Therefore, we obtain $\mLambda = \mLambda^T = \mG^T\mX$ and $\mathcal{D}_\mX \mathcal{L}(\mX, \mLambda) = \mG - \mX\mG^T\mX = 0$. By definition, $\mA \triangleq \mG\mX^T - \mX\mG^T$. We have $\mA\mX = \mG - \mX\mG^T\mX = \nabla \mathcal{F}$. The last statement is trivial.
\end{proof}
Let $\mX \in \mathcal{V}_p(\R^n)$, and $\mW$ be any $n \times n$ skew-symmetric matrix. We consider the following curve that transforms $\mX$ by $\big( \mI + \frac{\tau}{2} \mW \big)^{-1} \big(\mI - \frac{\tau}{2} \mW\big)$:
\begin{equation}
\mY(\tau) = \big( \mI + \frac{\tau}{2} \mW \big)^{-1} \big(\mI - \frac{\tau}{2} \mW\big) \mX.
\label{eq:cayley}
\end{equation}
This is called as the \textit{Cayley transformation}. Its derivative with respect to $\tau$ is
\begin{equation}
\mY'(\tau) = -\bigg(\mI + \frac{\tau}{2} \mW\bigg)^{-1} \mW \bigg( \frac{\mX + \mY(\tau)}{2} \bigg).
\label{eq:y-derivative}
\end{equation}
The curve has the following properties:
\begin{enumerate}
\item It stays in the Stiefel manifold, i.e. $\mY(\tau)^T \mY(\tau) = \mI$.
\item Its tangent vector at $\tau = 0$ is $\mY'(0) = -\mW \mX$. It can be easily derived from Lemma \ref{lemma:tangent} that $\mY'(0)$ is in the tangent space $\mathcal{T}_{\mY(0)} \mathcal{V}_p(\R^n)$. Since $\mY(0) = X$ and $\mW$ is a skew-symmetric matrix, by letting $\mZ = -\mW\mX$, it is trivial that $\mZ^T\mX + \mX^T\mZ = 0$.
\end{enumerate}
\begin{lemma}
If we set $\mW \triangleq \mA \triangleq \mG\mX^T - \mX\mG^T$ (see Lemma ~\ref{lemma:first-order-condition}), then the curve $\mY(\tau)$ (defined in Eq.~(\ref{eq:cayley})) is a decent curve for $\mathcal{F}$ at $\tau = 0$, that is
$$\mathcal{F}'_\tau(\mY(0)) \triangleq \frac{\partial \mathcal{F}(\mY(\tau))}{\partial \tau}\bigg|_{\tau = 0} = -\frac{1}{2} ||\mA||_F^2.$$
\label{lemma:descent-curve}
\end{lemma}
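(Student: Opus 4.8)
The plan is to reduce the derivative of $\mathcal{F}$ along the Cayley curve to a plain Euclidean inner product via the chain rule, and then to evaluate that inner product using nothing more than the skew-symmetry of $\mA$ and its defining formula $\mA = \mG\mX^T - \mX\mG^T$. First I would invoke the directional-derivative identity recorded above: since $\mathcal{F}$ is differentiable and the curve satisfies $\mY(0) = \mX$, the chain rule gives
$\mathcal{F}'_\tau(\mY(0)) = \mathcal{D}\mathcal{F}_{\mX}(\mY'(0)) = \langle \mG, \mY'(0)\rangle$, where $\mG \triangleq \mathcal{D}\mathcal{F}_{\mX}$. By property 2 of the Cayley curve stated just above the lemma, with the choice $\mW \triangleq \mA$, we have $\mY'(0) = -\mW\mX = -\mA\mX$, and therefore $\mathcal{F}'_\tau(\mY(0)) = -\langle \mG, \mA\mX\rangle$.

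The only real computation is then to show that $\langle \mG, \mA\mX\rangle = \tfrac12 ||\mA||_F^2$. Rewriting the left side with the trace, $\langle \mG, \mA\mX\rangle = \text{trace}(\mG^T\mA\mX) = \text{trace}(\mX\mG^T\mA) = \langle \mG\mX^T, \mA\rangle$, using cyclicity of the trace. On the other hand, expanding the Frobenius norm through $\mA = \mG\mX^T - \mX\mG^T$ gives $||\mA||_F^2 = \langle \mA, \mA\rangle = \langle \mG\mX^T, \mA\rangle - \langle \mX\mG^T, \mA\rangle$; since $\mX\mG^T = (\mG\mX^T)^T$ and $\mA$ is skew-symmetric (so $\langle \mB^T, \mA\rangle = -\langle \mB, \mA\rangle$ for any $\mB$), the second term equals $+\langle \mG\mX^T, \mA\rangle$, whence $||\mA||_F^2 = 2\langle \mG\mX^T, \mA\rangle$. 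Combining the two displays yields $\langle \mG, \mA\mX\rangle = \tfrac12 ||\mA||_F^2$, and substituting back gives $\mathcal{F}'_\tau(\mY(0)) = -\tfrac12 ||\mA||_F^2$, as claimed. (Equivalently, one may cite $\nabla\mathcal{F} = \mA\mX$ from Lemma~\ref{lemma:first-order-condition} and note that the identity above is exactly $\langle \mG, \nabla\mathcal{F}\rangle = \tfrac12\|\mA\|_F^2$.)

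I expect the main obstacle to be purely bookkeeping in the trace manipulations: keeping straight which factors are $n\times n$ versus $n\times p$, invoking $\mX^T\mX = \mI_p$ only where it is genuinely available, and applying the skew-symmetry relation $\langle \mB^T,\mA\rangle = -\langle \mB,\mA\rangle$ in the right place. A slightly more pedestrian alternative that sidesteps the clever collapse is to expand both quantities down to the same two scalars: $\langle \mG,\mA\mX\rangle = \text{trace}(\mG^T\mG\mX^T\mX) - \text{trace}(\mG^T\mX\mG^T\mX) = ||\mG||_F^2 - \text{trace}((\mG\mX^T)^2)$, while $||\mA||_F^2 = -\text{trace}(\mA^2) = 2||\mG||_F^2 - 2\,\text{trace}((\mG\mX^T)^2)$, and matching them gives the factor of two. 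Either route uses only Lemma~\ref{lemma:first-order-condition} and the already-established properties of the Cayley curve, so no new idea is required.
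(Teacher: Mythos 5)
Your proof is correct and takes essentially the same route as the paper's: the chain rule together with $\mY'(0)=-\mA\mX$ reduces the derivative at $\tau=0$ to $-\text{trace}(\mG^T\mA\mX)$, and a trace/skew-symmetry manipulation identifies this with $-\tfrac{1}{2}\|\mA\|_F^2$. You merely spell out the algebra the paper compresses into a single line (your second, more pedestrian expansion into $\|\mG\|_F^2-\text{trace}((\mG\mX^T)^2)$ is exactly the intermediate form implicit in the paper's computation), so no new idea is needed.
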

\begin{proof}
By the chain rule, we get 
$$\mathcal{F}'_\tau(\mY(\tau)) = \text{trace}(\mathcal{D}\mathcal{F}(\mY(\tau))^T \mY'(\tau)).$$
At $\tau = 0$, $\mathcal{D}\mathcal{F}(\mY(0)) = \mG$ and $\mY'(0) = -\mA\mX$. Therefore, 
$$\mathcal{F}'_\tau(\mY(0)) = -\text{trace}(\mG^T(\mG\mX^T - \mX\mG^T)\mX) = -\frac{1}{2}\text{trace}(\mA\mA^T) = -\frac{1}{2}||\mA||_F^2.$$
\end{proof}
It is well known that the steepest descent method with a fixed step size may not converge, but the convergence can be guaranteed by choosing the step size wisely: one can choose a step size by minimizing $\mathcal{F}(\mY(\tau))$ along the curve $\mY(\tau)$ with respect to $\tau$ \citep{Wen10}. With the choice of $\mW$ given by Lemma \ref{lemma:descent-curve}, the minimization algorithm using $\mY(\tau)$ is roughly sketched as follows: Start with some initial $\mX^{(0)}$. For $t > 0$, we generate $\mX^{(t + 1)}$ from $\mX^{(t)}$ by a curvilinear search along the curve $\mY(\tau) = \big( \mI + \frac{\tau}{2} \mW \big)^{-1} \big(\mI - \frac{\tau}{2} \mW\big) \mX^{(t)}$ by changing $\tau$. Because finding the global minimizer is computationally infeasible, the search terminates when then Armijo-Wolfe conditions that indicate an approximate minimizer are satisfied. The Armijo-Wolfe conditions require two parameters $0 < \rho_1 < \rho_2 < 1$ \citep{NoceWrig06} \citep{Wen10} \citep{Tagare2011NotesOO}:
\begin{equation}
\mathcal{F}(\mY(\tau)) \leq \mathcal{F}(\mY(0)) + \rho_1\tau\mathcal{F}'_\tau(\mY(0))
\label{eq:condition-1}
\end{equation}
\begin{equation}
\mathcal{F}'_\tau(\mY(\tau)) \ge \rho_2\mathcal{F}'_\tau(\mY(0))
\label{eq:condition-2}
\end{equation}
where $\mathcal{F}'_\tau(\mY(\tau)) = \text{trace}(\mG^T\mY'(\tau))$ while $\mY'(\tau)$ is computed as Eq.~(\ref{eq:y-derivative}) and $\mY'(0) = -\mA\mX$. The gradient descent algorithm on Stiefel manifold to optimize the generic orthogonal-constraint problem (\ref{eq:opt-prob}) with the curvilinear search submodule is described in Algorithm \ref{alg:stiefel}, which is used as a submodule in part of our learning algorithm to solve the MMF in (\ref{eq:mmf-opt}). The algorithm can be trivially extended to solve problems with multiple variables and constraints.

\begin{algorithm}
\caption{Stiefel manifold gradient descent algorithm} \label{alg:stiefel}
\begin{algorithmic}[1]
\State Given $0 < \rho_1 < \rho_2 < 1$ and $\epsilon > 0$.
\State Given an initial point $\mX^{(0)} \in \mathcal{V}_p(\R^n)$.
\State $t \gets 0$
\While{true}
	\State $\mG \gets \big(\frac{\partial \mathcal{F}(\mX^{(t)})}{\partial \mX^{(t)}_{i, j}}\big)$ \Comment{Compute the gradient of $\mathcal{F}$ w.r.t $\mX$ elemense-wise}
	\State $\mA \gets \mG{\mX^{(t)}}^T - \mX^{(t)}\mG^T$ \Comment{See Lemma 2, 3}
	\State Initialize $\tau$ to a non-zero value. \Comment{Curvilinear search for the optimal step size}
	\While{(\ref{eq:condition-1}) and (\ref{eq:condition-2}) are \textbf{not} satisfied} \Comment{Armijo-Wolfe conditions}
		\State $\tau \gets \frac{\tau}{2}$ \Comment{Reduce the step size by half}
	\EndWhile
	\State $\mX^{(t + 1)} \gets \mY(\tau)$ \Comment{Update by the Cayley transformation}
	\If{$||\nabla \mathcal{F}(\mX^{(t + 1)})|| \leq \epsilon$} \Comment{Stopping check. See Lemma 2.}
		\State \textbf{STOP}
	\Else
		\State $t \gets t + 1$
	\EndIf
\EndWhile
\end{algorithmic}
\end{algorithm}
\section{Reinforcement Learning} \label{sec:RL_2}

\subsection{Policy gradient methods} \label{sec:GP}

Policy gradient has been a widely used approach to solve reinforcement learning problems that targets at modeling and optimizing the policy directly \citep{10.5555/3312046}. Monte-Carlo policy gradient (REINFORCE) \citep{SCC.Williams1988} \citep{10.1007/BF00992696} \citep{NIPS1999_464d828b} depends on an estimated return by Monte-Carlo methods using episode samples to update the learnable parameters $\theta$ of the policy network $\pi_\theta$. We define the value of state $s$ when we follow a policy $\pi$ as $V^\pi(s) = \mathbb{E}_{a \sim \pi}[g_\ell|s_\ell = s]$. The value of (state, action) pair when we follow a policy $\pi$ is defined similarly as $Q^\pi(s, a) = \mathbb{E}_{a \sim \pi}[g_\ell|s_\ell = s, a_\ell = a]$. The value of the reward objective function depends on the policy and is defined as
\begin{equation}
J(\theta) = \sum_{s \in S} d^\pi(s) V^\pi(s) = \sum_{s \in S} d^\pi(s) \sum_{a \in A} \pi_\theta(a|s) Q^\pi(s, a)
\label{eq:reward-obj}
\end{equation}
where $d^\pi(s) = p(s_L = s|s_0, \pi_\theta)$ is the stationary distribution of Markov chain for $\pi_\theta$. It is important to remark that our MDP process terminates after a finite number of transitions (e.g., $L$), so $d^\pi(s)$ is the probability that we end up at state $s$ when starting from $s_0$ and following policy $\pi_\theta$ for $L$ steps. \citep{NIPS1999_464d828b} has shown that an unbiased estimate of the gradient of (\ref{eq:reward-obj}) can be obtained from experience using an approximate value function. The expectation of the sample gradient is equal to the actual gradient:
\begin{equation}
\nabla_\theta J(\theta) = \mathbb{E}_\pi[Q^\pi(s, a) \nabla_\theta \log \pi_\theta(a|s)] = \mathbb{E}_\pi[g_\ell \nabla_\theta \log \pi_\theta(a_\ell|s_\ell)]
\label{eq:gradient}
\end{equation}
that allows us to update our policy gradient by measuring $g_\ell$ from real sample trajectories. Based on (\ref{eq:gradient}), the update rule for policy parameters is simply as 
$$\theta \leftarrow \theta + \eta \gamma^\ell g_\ell \nabla_\theta \log \pi_\theta(a_\ell|s_\ell) \ \ \ \ \text{for} \ \ \ell = 0, .., L - 1;$$
where $\eta$ is the learning rate, that is used in training our policy network $\pi_\theta$.

\subsection{2-phase process} 

The learning algorithm is expensive due to the Stiefel manifold optimization in line 11 to find the optimal rotations $\mO_\ell$ that are used to compute the rewards $g_\ell$. In practice, we propose a $2$-phase process that is more efficient:
\begin{itemize}
\item \textbf{Phase 1:} Reinforcement learning to find the sequence of indices, but instead of manifold optimization, we just use the closed-form solutions for $\mO_\ell$ as the eigenvectors of $\mA_{\sS_\ell, :} \mA_{\sS_\ell, :}^T$ to estimate the rewards. In all our experiments, we implement the policy network by two graph neural networks, one to select the pivot (wavelet index) and the another one to select $K - 1$ indices, with 4 layers of message passing and hidden dimension of 10. The input node feature for node $v$ (or the $v$-th row) is binary: $1$ if $v \in \sS_\ell$, otherwise $0$. We use $\gamma = 1$ as the discount factor and learning rate $\eta = 10^{-3}$.
\item \textbf{Phase 2:} Given a sequence of indices found by the previous phase, we apply Stiefel manifold optimization to actually find the optimal rotations accordingly.
\end{itemize}

\subsection{Transfer learning} \label{sec:transfer}

Ideally, we want our policy network $\pi_\theta$ to be \textit{universal} in the sense that the same trained policy can be applied to different graphs with little adaptation or without any further training. However, the search space is gigantic with large graphs such as social networks, and the cost of training the policy is computationally expensive. Therefore, we apply the idea of \textit{transfer learning} that is to reuse or transfer information from previously learned tasks (source tasks) into new tasks (target tasks). The source task here is to train our GNN policy on a dataset of small graphs (e.g., possibly synthetic graphs) that are much faster to train on, and the target task is to run the trained policy on the large actual graph. For example, given citation networks with thousands of nodes such as Cora and Citeseer \citep{PSen2008}, we generate the dataset for policy training by partitioning the big graph into many smaller connected clusters. The learning algorithm can be easily modified for such a purpose (e.g., training multiple graphs simultaneously). 

\end{document}